\renewcommand{\cite}{\citep}
\newcommand{\dist}{\gD}
\newcommand{\src}{\gS}
\newcommand{\trg}{\gT}
\newcommand{\disloss}{\ell_{\textrm{dis}}}
\newcommand{\logloss}{\ell_{\textrm{logistic}}}
\newcommand{\hdh}{\ensuremath{\gH\Delta\gH}}
\newcommand{\disdis}{\textsc{Dis}$^2$\xspace}
\newlength\savewidth
\newtheorem{theorem}{Theorem}[section]
\newtheorem{lemma}[theorem]{Lemma}
\newtheorem{corollary}[theorem]{Corollary}
\newtheorem{proposition}[theorem]{Proposition}
\newtheorem{assumption}[theorem]{Assumption}
\theoremstyle{definition}
\newtheorem{definition}[theorem]{Definition}
\newtheorem{fact}[theorem]{Fact}
\newtheorem{principle}[theorem]{Principle}
\theoremstyle{remark}
\newtheorem{remark}[theorem]{Remark}
\title{
(Almost) Provable Error Bounds Under Distribution Shift via Disagreement Discrepancy
}
\author{%
	\large{Elan Rosenfeld} \\
	\large{Carnegie Mellon University}\\
	{\texttt{elan@cmu.edu}} \\
	\And
	\large{Saurabh Garg} \\
	\large{Carnegie Mellon University} \\
	{\texttt{sgarg2@andrew.cmu.edu }} \\
}
\newcommand*{\thead}[1]{%
\multicolumn{1}{c}{\bfseries\begin{tabular}{@{}c@{}}#1\end{tabular}}}
\def\eqref#1{equation~\ref{#1}}
\def\1{\bm{1}}
\newcommand{\test}{\mathcal{D_{\mathrm{test}}}}
\DeclareMathAlphabet{\mathsfit}{\encodingdefault}{\sfdefault}{m}{sl}
\SetMathAlphabet{\mathsfit}{bold}{\encodingdefault}{\sfdefault}{bx}{n}
\def\gD{{\mathcal{D}}}
\def\gH{{\mathcal{H}}}
\def\gL{{\mathcal{L}}}
\def\gO{{\mathcal{O}}}
\def\gS{{\mathcal{S}}}
\def\gT{{\mathcal{T}}}
\def\gX{{\mathcal{X}}}
\def\gY{{\mathcal{Y}}}
\newcommand{\E}{\mathbb{E}}
\newcommand{\R}{\mathbb{R}}
\newcommand\norm[2]{\left|\!\left|#1 \right|\!\right|_{#2}}
\DeclareMathOperator*{\argmax}{arg\,max}
\setlist{leftmargin=4mm,itemsep=0pt,topsep=0pt}
\def\blfootnote{\gdef\@thefnmark{}\@footnotetext}
\colorlet{alternateRowColor}{magenta!10}
\newcommand{\calY}{\mathcal{Y}}
\newcommand{\enote}[1]{\textcolor{red}{[E: #1]}}
\newcommand{\anote}[1]{\textcolor{blue}{[A: #1]}}
\newcommand{\pnote}[1]{\textcolor{orange}{[P: #1]}}
\newcommand{\enote}[1]{}
\newcommand{\anote}[1]{}
\newcommand{\pnote}[1]{}
\newcommand{\indict}[1]{\mathbb{I}\left[#1\right]}
\begin{document}

\maketitle

\begin{abstract}
We derive an (almost) guaranteed upper bound on the error of deep neural networks under distribution shift using unlabeled test data. Prior methods either give bounds that are vacuous in practice or give \emph{estimates} that are accurate on average but heavily underestimate error for a sizeable fraction of shifts. In particular, the latter only give guarantees based on complex continuous measures such as test calibration---which cannot be identified without labels---and are therefore unreliable. Instead, our bound requires a simple, intuitive condition which is well justified by prior empirical works and holds in practice effectively 100\% of the time. The bound is inspired by $\hdh$-divergence but is easier to evaluate and substantially tighter, consistently providing non-vacuous guarantees. Estimating the bound requires optimizing one multiclass classifier to disagree with another, for which some prior works have used sub-optimal proxy losses; we devise a ``disagreement loss'' which is theoretically justified and performs better in practice. We expect this loss can serve as a drop-in replacement for future methods which require maximizing multiclass disagreement. Across a wide range of benchmarks, our method gives valid error bounds while achieving average accuracy comparable to competitive estimation baselines.

\end{abstract}

\section{Introduction}
\label{sec:intro}
When deploying a model, it is important to be confident in how it will perform under inevitable distribution shift. Standard methods for achieving this include data dependent uniform convergence bounds \citep{mansour2009domain, bendavid2007analysis} (typically vacuous in practice) or assuming a precise model of how the distribution can shift \citep{rahimian2019distributionally, chen2022iterative, rosenfeld2021risks}. Unfortunately, it is difficult or impossible to determine how severely these assumptions are violated by real data (``all models are wrong''), so practitioners usually cannot trust such bounds with confidence.

To better estimate test performance in the wild, some recent work instead tries to directly predict accuracy of neural networks using unlabeled data from the test distribution of interest, \citep{garg2022leveraging, baek2022agreement, lu2023predicting}. While these methods predict the test performance surprisingly well, they lack pointwise trustworthiness and verifiability: their estimates are good on average over all distribution shifts, but they provide no guarantee or signal of the quality of any individual prediction (here, each point is a single test \emph{distribution}, for which a method predicts a classifier's average accuracy). Because of the opaque conditions under which these methods work, it is also difficult to anticipate their failure cases---indeed, it is reasonably common for them to substantially overestimate test accuracy for a particular shift, which is problematic when optimistic deployment can be costly or catastrophic. Worse yet, we find that this gap \emph{grows with test error} (\cref{fig:dis2_vs_others}), making these predictions least reliable under large distribution shift, which is precisely when their reliability is most important. \textbf{Although it is clearly impossible to guarantee upper bounds on test error for all shifts,} there is still potential for error bounds that are intuitive and reasonably trustworthy.

In this work, we develop a method for (almost) provably bounding test error of classifiers under distribution shift using unlabeled test points. Our bound's only requirement is a simple, intuitive, condition which describes the ability of a hypothesis class to achieve small loss on a particular objective defined over the (unlabeled) train and test distributions. Inspired by $\hdh$-divergence \citep{bendavid2010theory}, our method requires training a critic to maximize agreement with the classifier of interest on the source distribution while simultaneously maximizing \emph{dis}agreement on the target distribution; we refer to this joint objective as the \emph{disagreement discrepancy}, and so we name the method \disdis. We optimize this discrepancy over linear classifiers using deep features---or linear functions thereof---finetuned on only the training set. Recent evidence suggests that such representations are sufficient for highly expressive classifiers even under large distribution shift \citep{rosenfeld2022domain}. Experimentally, we find that our bound is valid effectively 100\% of the time,\footnote{The few violations are expected a priori, have an obvious explanation, and only occur for a specific type of learned representation. We defer a more detailed discussion of this until after we present the bound.} consistently giving non-trivial lower bounds on test accuracy which are reasonably comparable to competitive baselines.

\begin{figure}[t]
    \centering
    \includegraphics[width = .7\linewidth]{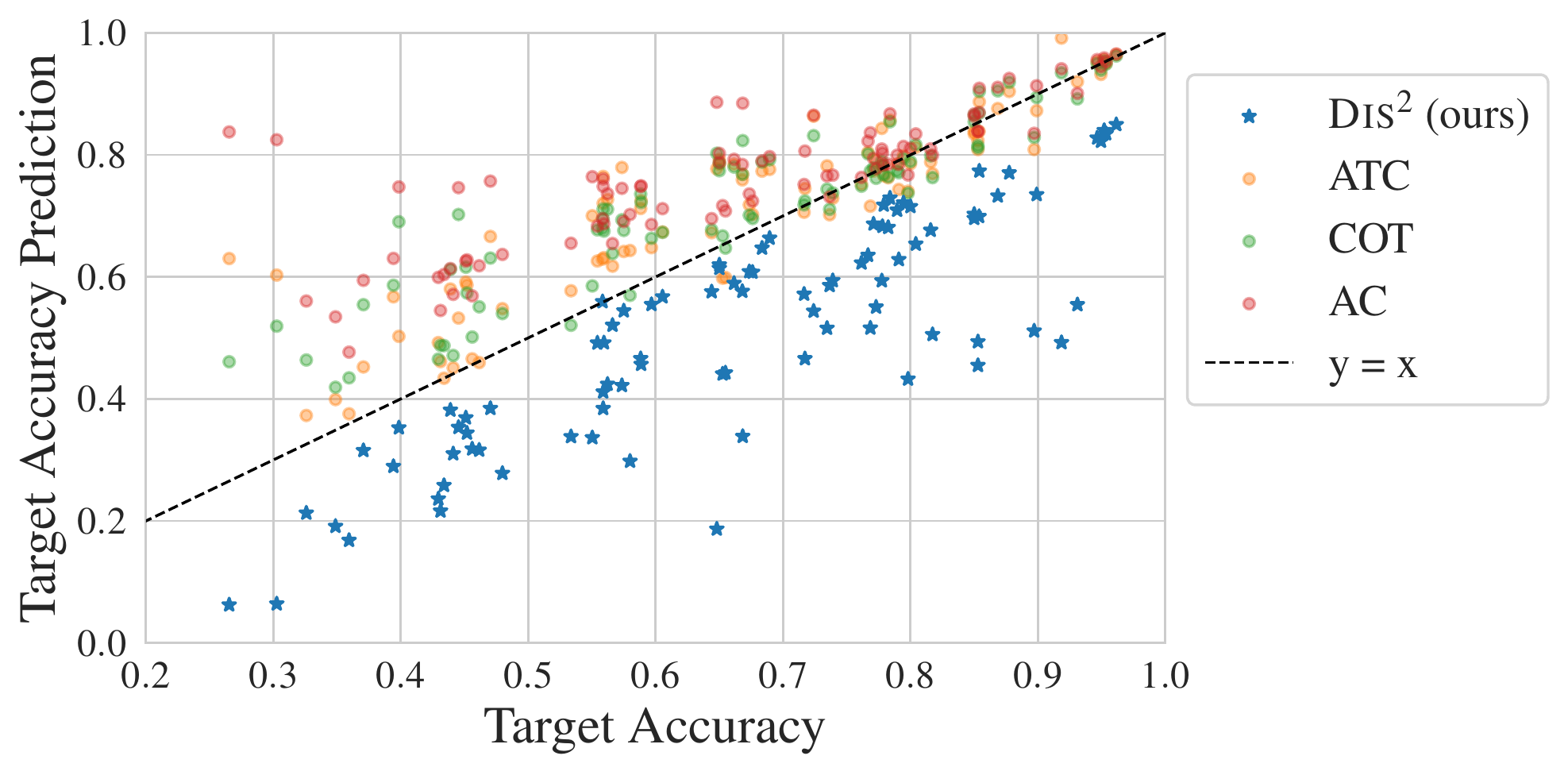}
    \caption{\textbf{Our bound vs. three prior methods for estimation across a wide variety of distribution shift benchmarks (e.g., WILDs, BREEDs, DomainNet) and training methods (e.g., ERM, FixMatch, BN-adapt).} Prior methods are accurate on average, but it is difficult or impossible to know when a given prediction is reliable and why. Worse yet, they usually overestimate accuracy, with the gap growing as test accuracy decreases---\emph{this is precisely when a reliable, conservative estimate is most desirable.} Instead, \disdis maximizes the \textbf{dis}agreement \textbf{dis}crepancy to give a reliable error upper bound which holds effectively 100\% of the time.}
    \label{fig:dis2_vs_others}
    \vspace{-10pt}
\end{figure}
Additionally, we empirically show that it is even possible to approximately test this bound's likelihood of being satisfied with only unlabeled data: the optimization process itself provides useful information about the bound's validity, and we use this to construct a score which linearly correlates with the tightness of the bound. This score can then be used to relax the original bound into a sequence of successively tighter-yet-less-conservative estimates, interpolating between robustness and accuracy and allowing a user to make estimates according to their specific risk tolerance.

While maximizing agreement is statistically well understood, our method also calls for maximizing \emph{dis}agreement on the target distribution. This is not so straightforward in the multiclass setting, and we observe that prior works use unsuitable losses which do not correspond to minimizing the 0-1 loss of interest and are non-convex (or even \emph{concave}) in the model logits \citep{chuang2020estimating, pagliardini2023dbat}. To rectify this, we derive a new ``disagreement loss'' which serves as an effective proxy loss for maximizing multiclass disagreement. Experimentally, we find that minimizing this loss results in lower risk (that is, higher disagreement) compared to prior methods, and we believe it can serve as a useful drop-in replacement for any future methods which require maximizing multiclass disagreement.

Experiments across numerous vision datasets (BREEDs \citep{santurkar2020breeds}, 
FMoW-WILDs \citep{wilds2021}, Visda \citep{visda2017}, Domainnet \citep{peng2019moment}, CIFAR10, CIFAR100 \citep{krizhevsky2009learning} and OfficeHome \citep{venkateswara2017deep}) demonstrate the effectiveness of our bound. Though \disdis is competitive with prior methods for error estimation, \textbf{we emphasize that our focus is \emph{not} on improving raw predictive accuracy}---rather, we hope to obtain reliable (i.e., conservative), reasonably tight bounds on the test error of a given classifier under distribution shift.
In particular, while existing methods tend to severely overestimate accuracy as the true accuracy drops, our bound maintains its validity while remaining non-vacuous, \emph{even for drops in accuracy as large as 70\%}. In addition to source-only training, we experiment with unsupervised domain adaptation 
methods that use unlabeled target data
and show that our observations continue to hold. 

\section{Related Work}
\label{sec:related}

\paragraph{Estimating test error with unlabeled data.}
The generalization capabilities of overparameterized models on in-distribution data have been extensively studied using conventional machine learning tools \citep{neyshabur2015norm,neyshabur2017exploring,neyshabur2017implicit,neyshabur2018role,dziugaite2017computing,bartlett2017spectrally,zhou2018non,long2019generalization, nagarajan2019deterministic}. 
This research aims to bound the generalization gap by evaluating complexity measures of the trained model. However, these bounds tend to be numerically loose compared to actual generalization error \citep{zhang2016understanding, nagarajan2019uniform}.
Another line of work instead explores the use of unlabeled data for predicting in-distribution generalization \citep{platanios2016estimating,platanios2017estimating,garg2021ratt,nakkiran2020distributional, jiang2022assessing}. More relevant to our work, there are several methods that predict the error of a classifier under distribution shift with unlabeled test data: (i) methods that explicitly predict the correctness of the model 
on individual unlabeled points~\citep{deng2021labels,deng2021does, chen2021detecting}; and
(ii) methods that directly estimate the overall error without making a pointwise prediction \citep{chen2021mandoline,guillory2021predicting,chuang2020estimating, garg2022leveraging, baek2022agreement}.

To achieve a consistent estimate of the target accuracy, several works require
calibration on the target domain \citep{jiang2022assessing,guillory2021predicting}. 
However, these methods often yield poor estimates because deep models trained and calibrated on a source domain are not typically calibrated on previously unseen domains \citep{ovadia2019can}.
Additionally,  \citet{deng2021labels,guillory2021predicting} 
require a subset of labeled target domains to learn a regression function that predicts model performance---but thus requires significant a priori knowledge about the nature of shift
that, in practice, might not be available before models are deployed in the wild.
Closest to our work is \citet{chuang2020estimating}, where the authors 
use domain-invariant predictors as a proxy for unknown target labels. However, there are several crucial differences. First, 
like other works, their method only \emph{estimates} the target accuracy---the error bounds they derive are not reasonably computable in practice.
Second, their method relies on multiple approximations and the tuning of numerous hyperparameters, e.g. a threshold term and multiple lagrangian multipliers which trade off strengths of different regularizers. Their approach is also computationally demanding; as a result, proper tuning is difficult and the method does not scale to modern deep networks. Finally, they suggest minimizing the (concave) negative cross-entropy loss, but we show that this can be a poor proxy for maximizing disagreement, and we propose a more suitable replacement which performs much better in practice.

\paragraph{Uniform convergence bounds.}
Our bound is inspired by classic analyses using $\gH$- and $\hdh$-divergence \citep{mansour2009domain, bendavid2007analysis, bendavid2010theory}. These provide error bounds via a complexity measure that is both data- and hypothesis-class-dependent. This motivated a long line of work on training classifiers with small corresponding complexity, such as restricting classifiers' discriminative power between source and target data \citep{ganin2016domain,sun2016return,long2018conditional, zhang2019bridging}. Unfortunately, such bounds are often intractable to evaluate and are usually vacuous in real world settings. We provide a more detailed comparison between such bounds and our approach in \cref{sec:hdh-compare}.

\section{Deriving an (Almost) Provable Error Bound}
\label{sec:bound}

\paragraph{Notation.} Let $\src, \trg$ denote the source and target (train and test) distributions, respectively, over labeled inputs $(x,y) \in \gX \times \gY$, and let $\hat\src$, $\hat\trg$ denote sets of samples from them with cardinalities $n_S$ and $n_T$ (they also denote the corresponding empirical distributions). Recall that we observe only the covariates $x$ without the label $y$ when a sample is drawn from $\trg$. We consider classifiers $h:\gX \to \R^{|\gY|}$ which output a vector of logits, and we let $\hat h$ denote the particular classifier whose error we aim to bound. Generally, we use $\gH$ to denote a hypothesis class of such classifiers. Occasionally, where clear from context, we use $h(x)$ to refer to the argmax logit, i.e. the predicted class. We treat these classifiers as deterministic throughout, though our analysis can easily be extended to probabilistic classifiers and labels. For a distribution $\dist$ on $\gX\times\gY$, let $\epsilon_{\dist}(h, h') := \E_{\dist}[\mathbf{1}\{\argmax_y h(x)_y \neq \argmax_y h'(x)_y\}]$ denote the one-hot disagreement between classifiers $h$ and $h'$ on $\dist$. Let $y^*$ represent the true labeling function such that $y^*(x) = y$ for all samples $(x,y)$; with some abuse of notation, we write $\epsilon_{\dist}(h)$ to mean $\epsilon_{\dist}(h, y^*)$, i.e. the 0-1 error of classifier $h$ on  distribution $\dist$.

The bound we derive in this work is extremely simple and relies on one new concept:
\begin{definition}
    The \emph{disagreement discrepancy} $\Delta(h, h')$ is the disagreement between $h$ and $h'$ on $\trg$ minus their disagreement on $\src$:
    \begin{align*}
        \Delta(h, h') &:= \epsilon_{\trg}(h, h') - \epsilon_{\src}(h, h').
    \end{align*}
\end{definition}
We leave the dependence on $\src, \trg$ implicit. Note that this term is symmetric and signed---it can be negative. With this definition, we now have the following lemma:
\begin{lemma}
\label{lemma:error-expansion}
    For any classifier $h$, $\epsilon_{\trg}(h) = \epsilon_{\src}(h) + \Delta(h, y^*)$.
\end{lemma}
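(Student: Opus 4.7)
The plan is essentially to unwind the definitions; this lemma is a direct algebraic identity rather than anything requiring machinery. First I would instantiate the second argument of the disagreement discrepancy at the true labeling function: that is, set $h' = y^*$ in the definition
\[
\Delta(h, h') := \epsilon_{\trg}(h, h') - \epsilon_{\src}(h, h'),
\]
which yields $\Delta(h, y^*) = \epsilon_{\trg}(h, y^*) - \epsilon_{\src}(h, y^*)$.

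Then I would invoke the abuse-of-notation convention stated just before the lemma, namely that $\epsilon_{\dist}(h)$ abbreviates $\epsilon_{\dist}(h, y^*)$. Under that convention the previous line reads $\Delta(h, y^*) = \epsilon_{\trg}(h) - \epsilon_{\src}(h)$, and adding $\epsilon_{\src}(h)$ to both sides gives exactly the claimed identity $\epsilon_{\trg}(h) = \epsilon_{\src}(h) + \Delta(h, y^*)$.

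There is no real obstacle: no probabilistic argument, no triangle inequality, no exchange of sup/inf, no empirical-to-population step is needed. The lemma is a pure bookkeeping statement whose entire content is that the shift-induced change in $0$-$1$ error equals the shift-induced change in disagreement with the ground-truth labeler. The only thing worth flagging in the write-up is that the argument uses $y^*$ as though it were an element of the hypothesis class for the purpose of evaluating $\epsilon_{\dist}(\cdot,\cdot)$; this is legitimate because $\epsilon_{\dist}(h, h')$ was defined only in terms of the argmax predictions of its two arguments, and $y^*$ already returns a class label by assumption, so no extra regularity is required.
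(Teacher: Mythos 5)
Your proposal is correct and is essentially identical to the paper's one-line proof: both simply expand $\Delta(h, y^*) = \epsilon_{\trg}(h, y^*) - \epsilon_{\src}(h, y^*)$, apply the convention $\epsilon_{\dist}(h) = \epsilon_{\dist}(h, y^*)$, and rearrange. No gap.
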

\begin{proof}
    By definition, $\epsilon_{\trg}(h) = \epsilon_{\src}(h) + \left( \epsilon_{\trg}(h) - \epsilon_{\src}(h) \right) = \epsilon_{\src}(h) + \Delta(h, y^*)$.
\end{proof}
We cannot directly use \cref{lemma:error-expansion} to estimate $\epsilon_{\trg}(\hat h)$ because the second term is unknown. However, observe that $y^*$ is \emph{fixed}. That is, while a learned $\hat h$ will depend on $y^*$---and therefore $\Delta(\hat h, y^*)$ may be large under large distribution shift---\textbf{$y^*$ is \emph{not} chosen to maximize $\Delta(\hat h, y^*)$ in response to the $\hat h$ we have learned.} This means that for a sufficiently expressive hypothesis class $\gH$, it should be possible to identify an alternative labeling function $h'\in\gH$ for which $\Delta(\hat h, h') \geq \Delta(\hat h, y^*)$ (we refer to such $h'$ as the \emph{critic}). In other words, we should be able to find an $h'\in\gH$ for which its \emph{implied} error gap $\epsilon_\trg(\hat h, h') - \epsilon_\src(\hat h, h')$---i.e., the error gap if we assume $h'$ is the true labeling function---is at least as large as the \emph{true} error gap $\epsilon_\trg(\hat h) - \epsilon_\src(\hat h)$. This key observation serves as the basis for our bound, and we discuss it in greater detail in \cref{sec:hdh-compare}.

In this work we consider the class $\gH$ of linear critics, with $\gX$ defined as source-finetuned deep neural representations or the resulting logits output by $\hat h$. Prior work provides strong evidence that this class has surprising capacity under distribution shift, including the possibility that functions very similar to $y^*$ lie in $\gH$ \citep{rosenfeld2022domain, kirichenko2022last, Kang2020Decoupling}. We formalize this intuition with the following assumption:
\begin{assumption}
    \label{orig-assumption}
    Define $h^* := \argmax_{h'\in\gH} \Delta(\hat h, h')$. We assume 
    \begin{align*}
        \Delta(\hat h, y^*) \leq \Delta(\hat h, h^*).
    \end{align*}
\end{assumption}
Note that this statement is only meaningful when considering restricted $\gH$ which may not contain $y^*$, as we do here. Note also that this assumption is made specifically for $\hat h$, i.e. on a per-classifier basis. This is important because while the above may not hold for every classifier $\hat h$, it need only hold for the classifiers whose error we would hope to bound, which is in practice a very small subset of classifiers (such as those which can be found by approximately minimizing the empirical training risk via SGD). From \cref{lemma:error-expansion}, we immediately have the following result:
\begin{proposition}
    \label{prop:orig-bound}
    Under \cref{orig-assumption}, $\epsilon_{\trg}(\hat h) \leq \epsilon_{\src}(\hat h) + \Delta(\hat h, h^*)$.
\end{proposition}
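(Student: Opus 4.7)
The plan is to chain together the two ingredients already in place: Lemma~\ref{lemma:error-expansion}, which exactly decomposes the target error of $\hat h$ as its source error plus the signed disagreement discrepancy against the true labeling function, and Assumption~\ref{orig-assumption}, which asserts that the critic-optimized discrepancy dominates the discrepancy against $y^*$. Since the target quantity $\epsilon_{\trg}(\hat h)$ is expressed by the lemma as a sum in which only $\Delta(\hat h, y^*)$ is unknown, bounding that term from above by an observable quantity immediately yields the proposition.

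Concretely, I would first invoke Lemma~\ref{lemma:error-expansion} with $h = \hat h$ to write the equality $\epsilon_{\trg}(\hat h) = \epsilon_{\src}(\hat h) + \Delta(\hat h, y^*)$. Next I would apply Assumption~\ref{orig-assumption} to replace $\Delta(\hat h, y^*)$ by the upper bound $\Delta(\hat h, h^*)$, where $h^* = \argmax_{h'\in\gH}\Delta(\hat h, h')$. Combining the two gives $\epsilon_{\trg}(\hat h) \leq \epsilon_{\src}(\hat h) + \Delta(\hat h, h^*)$, which is exactly the claimed bound.

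There is essentially no obstacle of a technical nature: the proof is a two-line substitution. The only thing worth emphasizing in the write-up is conceptual, namely that the inequality is valid precisely because $h^*$ is the maximizer of the discrepancy over $\gH$, so any sufficiently rich $\gH$ makes Assumption~\ref{orig-assumption} plausible (this is the substantive content, which is discussed in the surrounding text rather than inside the proof). The statement itself therefore warrants only a brief displayed derivation with pointers back to Lemma~\ref{lemma:error-expansion} and Assumption~\ref{orig-assumption}.
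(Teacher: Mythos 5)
Your proof is correct and is exactly the paper's argument: the paper states that the proposition follows immediately from \cref{lemma:error-expansion} together with \cref{orig-assumption}, which is precisely your two-line substitution of $\Delta(\hat h, y^*) \leq \Delta(\hat h, h^*)$ into the lemma's identity.
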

Unfortunately, identifying the optimal critic $h^*$ is intractable, meaning this bound is still not estimable---we present it as an intermediate result for clarity of presentation. To derive the practical bound we report in our experiments, we need one additional step. In \cref{sec:how-to-maximize}, we derive a ``disagreement loss'' which we use to approximately maximize the empirical disagreement discrepancy $\hat\Delta(\hat h, \cdot) = \epsilon_{\hat\trg}(\hat h, \cdot) - \epsilon_{\hat\src}(\hat h, \cdot)$. Relying on this loss, we instead make the assumption:
\begin{assumption}
    \label{strengthened-assumption}
    Suppose we identify the critic $h'\in\gH$ which maximizes a concave surrogate to the empirical disagreement discrepancy. We assume $\Delta(\hat h, y^*) \leq \Delta(\hat h, h')$.
\end{assumption}
This assumption is slightly stronger than \cref{orig-assumption}---in particular, \cref{orig-assumption} implies with high probability a weaker version of \cref{strengthened-assumption} with additional terms that decrease with increasing sample size and a tighter proxy loss.\footnote{Roughly, \cref{orig-assumption} implies $\Delta(\hat h, y^*) \leq \Delta(\hat h, h') + \gO\left( \sqrt{\frac{\log \nicefrac{1}{\delta}}{\min(n_S, n_T)}} \right) + \gamma$, where $\gamma$ is a data-dependent measure of how tightly the surrogate loss bounds the 0-1 loss in expectation.} Thus, the difference in strength between these two assumptions shrinks as the number of available samples grows and as the quality of our surrogate objective improves. Ultimately, our bound holds without these terms, implying that the stronger assumption is reasonable in practice. We can now present our main bound:
\begin{theorem}[Main Bound]
    \label{thm:bound}
    Under \cref{strengthened-assumption}, with probability $\geq 1-\delta$,
    \begin{align*}
        \epsilon_{\trg}(\hat h) &\leq \epsilon_{\hat\src}(\hat h) + \hat\Delta(\hat h, h') + \sqrt{\frac{(n_S + 4n_T) \log \nicefrac{1}{\delta}}{2 n_S n_T}}.
    \end{align*}
\end{theorem}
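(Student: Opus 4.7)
The plan is to chain together Lemma~\ref{lemma:error-expansion}, Assumption~\ref{strengthened-assumption}, and a single concentration inequality whose bounded-difference constants produce exactly the $(n_S + 4n_T)/(2 n_S n_T)$ scaling in the statement.

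First, I combine the two ``population'' ingredients. By Lemma~\ref{lemma:error-expansion}, $\epsilon_{\trg}(\hat h) = \epsilon_{\src}(\hat h) + \Delta(\hat h, y^*)$, and by Assumption~\ref{strengthened-assumption}, $\Delta(\hat h, y^*) \leq \Delta(\hat h, h')$ for the critic $h'$ that maximizes the empirical surrogate. Expanding $\Delta(\hat h, h') = \epsilon_{\trg}(\hat h, h') - \epsilon_{\src}(\hat h, h')$ yields the deterministic inequality
\begin{equation*}
\epsilon_{\trg}(\hat h) \;\leq\; \epsilon_{\src}(\hat h) \;+\; \epsilon_{\trg}(\hat h, h') \;-\; \epsilon_{\src}(\hat h, h').
\end{equation*}
It remains to replace the three population quantities on the right-hand side with their empirical counterparts up to a deviation term.

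Next, I apply McDiarmid's inequality to the single deviation
\begin{equation*}
D \;=\; \bigl[\epsilon_{\src}(\hat h) + \epsilon_{\trg}(\hat h, h') - \epsilon_{\src}(\hat h, h')\bigr] - \bigl[\epsilon_{\hat\src}(\hat h) + \hat\Delta(\hat h, h')\bigr].
\end{equation*}
Written as a function of the $n_S$ source samples and $n_T$ target samples (with $h'$ treated as fixed for the purposes of this step), $D$ is an average involving two kinds of summands. Each source sample $(x_i, y_i)$ contributes $X_i := \mathbf{1}\{\hat h(x_i) \neq y_i\} - \mathbf{1}\{\hat h(x_i) \neq h'(x_i)\} \in \{-1, 0, 1\}$, so perturbing one source sample changes $D$ by at most $2/n_S$. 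Each target sample $x_j$ contributes $Y_j := \mathbf{1}\{\hat h(x_j) \neq h'(x_j)\} \in \{0,1\}$, so perturbing one target sample changes $D$ by at most $1/n_T$. The sum of squared bounded differences is
\begin{equation*}
n_S \cdot \Bigl(\tfrac{2}{n_S}\Bigr)^{2} + n_T \cdot \Bigl(\tfrac{1}{n_T}\Bigr)^{2} = \frac{4}{n_S} + \frac{1}{n_T} = \frac{n_S + 4 n_T}{n_S n_T},
\end{equation*}
and McDiarmid's inequality then gives $D \leq \sqrt{(n_S + 4n_T)\log(1/\delta) / (2 n_S n_T)}$ with probability at least $1 - \delta$. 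Combining this with the deterministic inequality above produces the stated bound.

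The main obstacle is rigorously justifying that $h'$ can be treated as fixed in the McDiarmid step, since it is selected by maximizing an empirical surrogate computed from the same source and target samples. The cleanest fix is to imagine $h'$ optimized on an independent split or to absorb a uniform-convergence penalty over the linear critic class $\gH$ into the assumption itself; the footnote after Assumption~\ref{strengthened-assumption} indicates this is essentially the intended reading, so the proof will adopt that framing and otherwise proceed as above.
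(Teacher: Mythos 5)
Your proposal is correct and matches the paper's proof in all essentials: the same chaining of Lemma~\ref{lemma:error-expansion} with Assumption~\ref{strengthened-assumption}, and the same concentration step, since McDiarmid applied to your sum of independent bounded summands (source terms of range $2/n_S$, target terms of range $1/n_T$) is exactly the Hoeffding bound the paper uses, with identical constants. Your concern about $h'$ being data-dependent is handled in the paper by evaluating $\epsilon_{\hat\src}(\hat h)$ and $\hat\Delta(\hat h, h')$ on a holdout set, which is precisely the independent-split fix you propose.
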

\begin{proof}
    \cref{strengthened-assumption} implies $\epsilon_{\trg}(\hat h) \leq \epsilon_{\src}(\hat h) + \Delta(\hat h, h') = \epsilon_{\src}(\hat h, y^*) + \epsilon_{\trg}(\hat h, h') - \epsilon_{\src}(\hat h, h')$, so the problem reduces to upper bounding these three terms. We define the random variables 
    \begin{align*}
        r_{\src,i} &= 
        \begin{cases}
        0,& h'(x_i) = y_i,\\
        \nicefrac{1}{n_S},& h'(x_i) = \hat h(x_i) \neq y_i,\\
        \nicefrac{-1}{n_S},& h'(x_i) \neq \hat h(x_i) = y_i , 
        \end{cases}
        \qquad r_{\trg, i} = \frac{\mathbf{1}\{\hat h(x_i) \neq h'(x_i)\}}{n_T}
    \end{align*}
    for source and target samples, respectively. By construction, the sum of all of these variables is precisely $\epsilon_{\hat\src}(\hat h, y^*) + \epsilon_{\hat\trg}(\hat h, h') - \epsilon_{\hat\src}(\hat h, h')$ (note these are the empirical terms). Further, observe that 
    \begin{align*}
        \E\left[ \sum_{\src} r_{\src,i} \right] &= \E_\src[\mathbf{1}\{\hat h(x_i) \neq y_i\} - \mathbf{1}\{\hat h(x_i) \neq h'(x_i)\}] = \epsilon_{\src}(\hat h, y^*) - \epsilon_{\src}(\hat h, h'),\\
        \E\left[ \sum_{\trg} r_{\trg, i} \right] &= \E_\trg[\mathbf{1}\{\hat h(x_i) \neq h'(x_i)\}] = \epsilon_{\trg}(\hat h, h'),
    \end{align*}
    and thus the expectation of their sum is $\epsilon_{\src}(\hat h, y^*) + \epsilon_{\trg}(\hat h, h') - \epsilon_{\src}(\hat h, h')$ (the population terms). Now we apply Hoeffding's inequality: the probability that the expectation exceeds their sum by $t$ is no more than $\exp\left(-\frac{2t^2}{n_S \left(\nicefrac{2}{n_S}\right)^2 + n_T \left(\nicefrac{1}{n_T}\right)^2}\right)$. Solving for $t$ completes the proof.
\end{proof}
\begin{remark}
    While we state \cref{thm:bound} as an implication, \textbf{\cref{strengthened-assumption} is \emph{equivalent} to the stated bound up to finite-sample terms}. Our empirical findings (and prior work) suggest that \cref{strengthened-assumption} is reasonable in general, but this equivalence allows us to actually \emph{prove} that it holds in practice for some shifts. We elaborate on this in \cref{app:prove-assumption}.
\end{remark}
The high-level statement of \cref{thm:bound} is that if there is a simple (e.g., linear) critic $h'\in\gH$ with large disagreement discrepancy, $\hat h$ could have high error---likewise, if no critic achieves large discrepancy, we should expect low error. Here we gain a deeper understanding of the conceptual idea behind \cref{strengthened-assumption} and what it allows us to say \emph{via} \cref{thm:bound} about error under distribution shift. We distill this into the following core principle:

\vspace{8pt}
\setlength{\fboxrule}{.04cm}
\begin{centering}
\setstretch{1.25}
\noindent\fbox{%
    \parbox{.98\linewidth}{%
        \begin{principle}
        Consider the set of labeling functions which are approximately consistent with the labels on the source data. If all ``reasonably simple'' functions in this set would imply a classifier has low error on the target data, we should consider it \emph{more likely} that the classifier indeed has low error than that the true labeling function is very complex.
        \label{principle}
        \end{principle}
    }
}
\setstretch{1.}
\end{centering}
\vspace{8pt}

In particular, if we accept the premise that existing representations are good enough that a simple function can achieve high accuracy under distribution shift \citep{rosenfeld2022domain}, it follows---up to some approximation---that the only labeling functions we need consider are the ones which are simple and agree with $y^*$ on $\src$.
\begin{remark}
    Bounding error under distribution shift is fundamentally impossible without assumptions. Prior works which estimate accuracy using unlabeled data rely on experiments, suggesting that whatever condition allows their method to work holds in a variety of settings \citep{garg2022leveraging, baek2022agreement, lu2023predicting, jiang2022assessing, guillory2021predicting}; using these methods is equivalent to \emph{implicitly} assuming that it will hold for future shifts. Understanding these conditions is thus crucial for assessing in a given scenario whether they can be expected to be satisfied.\footnote{Whether and when to trust a black-box estimate that is consistently accurate in all observed settings is a centuries-old philosophical problem \citep{hume2000enquiry} which we do not address here. Regardless, \cref{fig:dis2_vs_others} shows that these estimates are \emph{not} consistently accurate, making interpretability that much more important.} It is therefore of great practical value that \cref{strengthened-assumption} is a simple, intuitive requirement: below we demonstrate that this simplicity allows us to a identify a potential failure case \emph{a priori}.
\end{remark}

\subsection{How Does \disdis Improve over $\gH$- and $\hdh$-Divergence?}
\label{sec:hdh-compare}
To verifiably bound a classifier's error under distribution shift, one must develop a meaningful notion of distance between distributions. One early attempt at this was \emph{$\gH$-divergence} \citep{bendavid2007analysis, mansour2009domain} which measures the ability of a binary hypothesis class $\gH$ to discriminate between $\src$ and $\trg$ in feature space. This was later refined to \emph{$\hdh$-divergence} \citep{bendavid2010theory}, which is equal to $\gH$-divergence where the discriminator class comprises all exclusive-ors between pairs of functions from the original class $\gH$. Though these measures can in principle provide non-vacuous bounds, they usually do not, and evaluating them is often intractable (particularly $\hdh$, because it requires maximizing an objective over all \emph{pairs} of hypotheses). Furthermore, these bounds are overly conservative even for simple function classes and distribution shifts because they rely on uniform convergence. In practice, \emph{we do not care} about bounding the error of all classifiers in $\gH$---we only care to bound the error of $\hat h$. This is a clear advantage of \disdis over $\hdh$.

\begin{figure}[t]
    \centering
    \begin{subfigure}{.275\linewidth}
        \includegraphics[width = \linewidth]{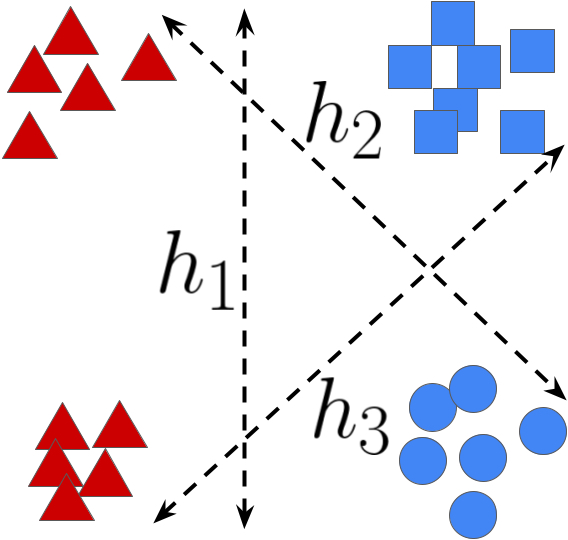}
        \caption{}
        \label{fig:hdh1}
    \end{subfigure}
    \hfill
    \begin{subfigure}{.275\linewidth}
        \includegraphics[width = \linewidth]{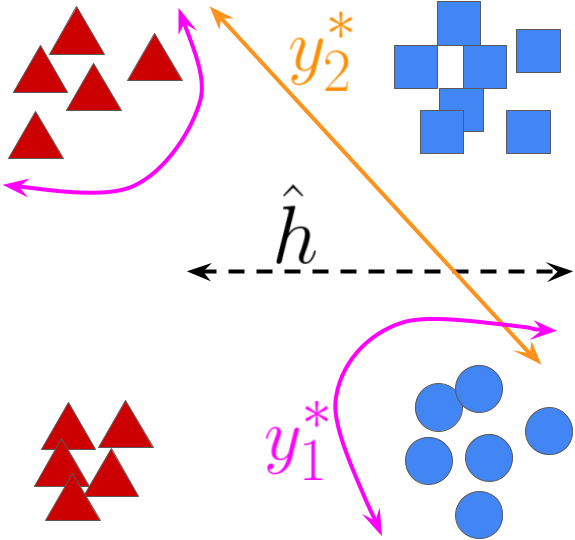}
        \caption{}
        \label{fig:hdh2}
    \end{subfigure}
    \hfill
    \begin{subfigure}{.275\linewidth}
        \includegraphics[width = \linewidth]{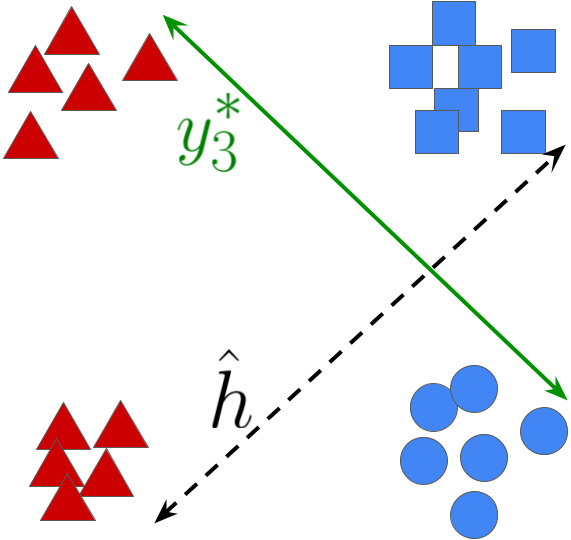}
        \caption{}
        \label{fig:hdh3}
    \end{subfigure}
    \caption{\textbf{The advantage of \disdis over bounds based on $\gH$- and $\hdh$-divergence.} Consider the task of classifying circles and squares (triangles are unlabeled). \textbf{(a):} Because $h_1$ and $h_2 \oplus h_3$ perfectly discriminate between $\src$ (blue) and $\trg$ (red), $\gH$- and $\hdh$-divergence bounds are always vacuous. In contrast, \disdis is only vacuous when 0\% accuracy is induced by a reasonably likely ground truth (such as $y^*_3$ in \textbf{(c)}, but not $y^*_1$ in \textbf{(b)}), and can often give non-vacuous bounds (such as $y^*_2$ in \textbf{(b)}).}
    \label{fig:hdh}
\end{figure}

\textbf{The true labeling function is never worst-case.\footnote{If it were, we'd see exactly 0\% test accuracy---and when does that ever happen?}~~} More importantly, as \cref{principle} exemplifies, one should not expect the distribution shift to be \emph{truly} worst-case, because the test distribution $\trg$ and ground truth $y^*$ are not chosen adversarially with respect to $\hat h$ \citep{rosenfeld2022online}.
\cref{fig:hdh} gives a simple demonstration of this point. Consider the task of learning a linear classifier to discriminate between squares and circles on the source distribution $\src$ (blue) and then bounding the error of this classifier on the target distribution $\trg$ (red), whose true labels are unknown and are therefore depicted as triangles. \cref{fig:hdh1} demonstrates that both $\gH$- and $\hdh$-divergence achieve their maximal value of 1, because both $h_1$ and $h_2 \oplus h_3$ perfectly discriminate between $\src$ and $\trg$. Thus both bounds would be vacuous.

Now, suppose we were to learn the max-margin $\hat h$ on the source distribution (\cref{fig:hdh2}). It is \emph{possible} that the true labels are given by the worst-case boundary as depicted by $y^*_1$ (pink), thus ``flipping'' the labels and causing $\hat h$ to have 0 accuracy on $\trg$. In this setting, a vacuous bound is correct. However, this seems rather unlikely to occur in practice---instead, recent experimental evidence \citep{rosenfeld2022domain, kirichenko2022last, Kang2020Decoupling} suggests that the true $y^*$ will be much simpler. The maximum disagreement discrepancy here would be approximately $0.5$, giving a test accuracy lower bound of $0.5$---this is consistent with plausible alternative labeling functions such as $y^*_2$ (orange). Even if $y^*$ is not linear, we may still expect that \emph{some} linear function will induce larger discrepancy; this is precisely \cref{orig-assumption}. Suppose instead we learn $\hat h$ as depicted in \cref{fig:hdh3}. Then a simple ground truth such as $y^*_3$ (green) is plausible, which would mean $\hat h$ has 0 accuracy on $\trg$. In this case, $y^*_3$ is also a critic with disagreement discrepancy equal to 1, and so \disdis would correctly output an error upper bound of $1$.

\paragraph{A setting where \disdis may be invalid.} There is one setting where it should be clear that \cref{strengthened-assumption} is less likely to be satisfied: when the representation we are using is explicitly regularized to keep $\max_{h'\in\gH} \Delta(\hat h, h')$ small. This occurs for domain-adversarial representation learning methods such as DANN \citep{ganin2016domain} and CDAN \citep{long2018conditional}, which penalize the ability to discriminate between $\src$ and $\trg$ in feature space. Given a critic $h'$ with large disagreement discrepancy, the discriminator $D(x) = \mathbf{1}\{\argmax_y \hat h(x)_y = \argmax_y h'(x)_y\}$ will achieve high accuracy on this task (precisely, $\frac{1+\Delta(\hat h, h')}{2}$). By contrapositive, enforcing low discriminatory power means that the max discrepancy must also be small. It follows that for these methods \disdis should not be expected to hold universally, and in practice we see that this is the case (\cref{fig:dann}). Nevertheless, when \disdis does overestimate accuracy, it does so by significantly less than prior methods.

\begin{figure}[t]
\centering
\includegraphics[width = 0.5\linewidth]{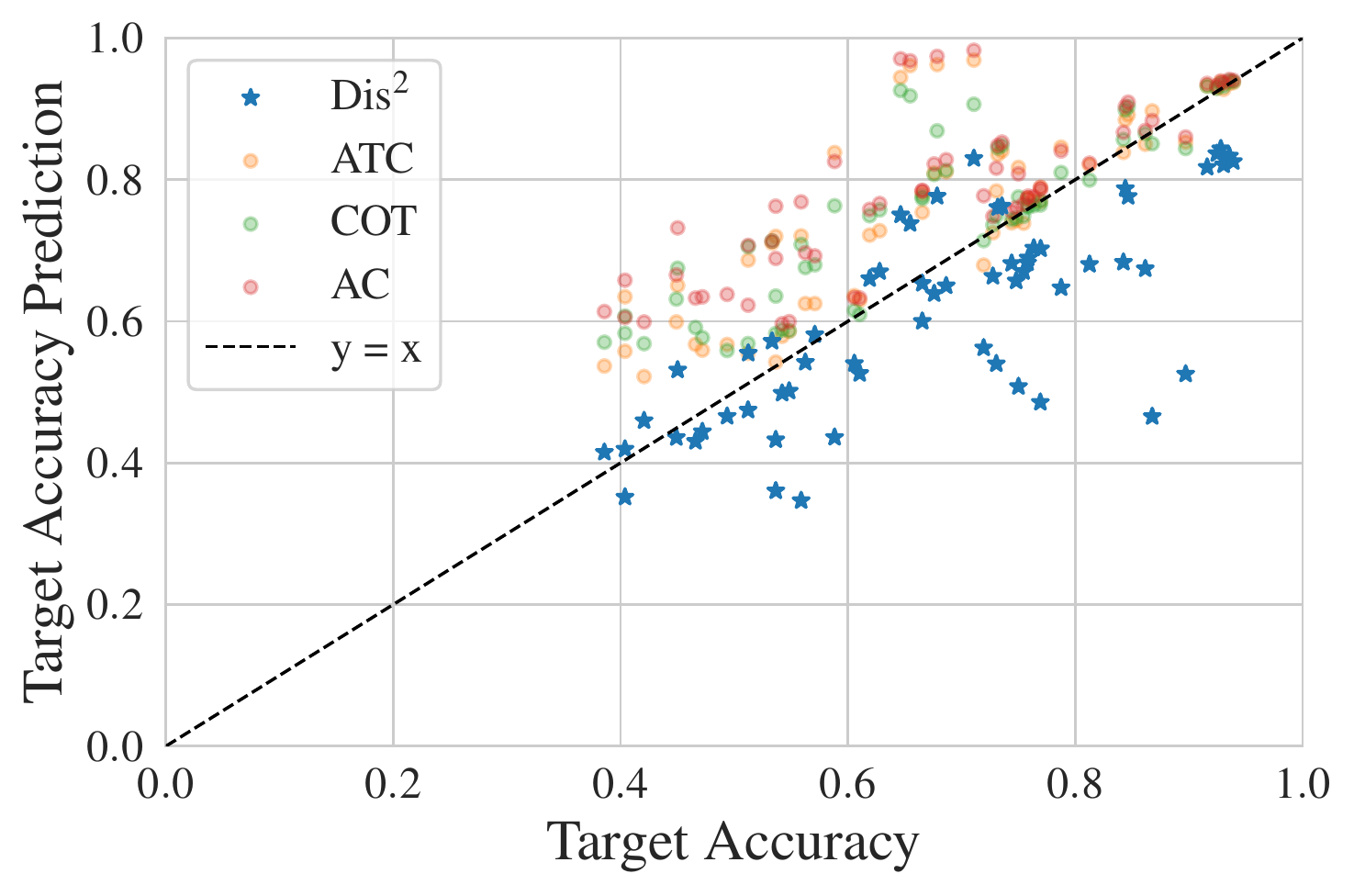}
\caption{\textbf{\disdis may be invalid when the features are explicitly learned to violate \cref{strengthened-assumption}.} Domain-adversarial representation learning algorithms such as DANN and CDAN indirectly minimize $\max_{h'\in\gH} \Delta(\hat h, h')$, meaning the necessary condition is less likely to be satisfied. Nevertheless, when \disdis does overestimate accuracy, it almost always does so by less than prior methods.}
\label{fig:dann}
\vspace{-5pt}
\end{figure}

\section{Efficiently Maximizing the Disagreement Discrepancy}
\label{sec:how-to-maximize}
For a classifier $\hat h$, \cref{thm:bound} clearly prescribes how to bound its test error: first, train a critic $h'$ on the chosen $\gX$ to approximately maximize $\Delta(\hat h, h')$, then evaluate $\epsilon_{\hat\src}(\hat h)$ and $\hat\Delta(\hat h, h')$ using a  holdout set. The remaining difficulty is in identifying the maximizing $h'\in\gH$---that is, the one which minimizes $\epsilon_{\src}(\hat h, h')$ and maximizes $\epsilon_{\trg}(\hat h, h')$. We can approximately minimize $\epsilon_{\src}(\hat h, h')$ by minimizing the sample average of the convex surrogate $\logloss := -\frac{1}{\log |\gY|} \log \textrm{softmax}(h(x))_y$ as justified by statistical learning theory. However, it is less clear how to maximize $\epsilon_{\trg}(\hat h, h')$.

A few prior works suggest proxy losses for multiclass disagreement \citep{chuang2020estimating, pagliardini2023dbat}. We observe that these losses are not theoretically justified, as they do not upper bound the 0-1 disagreement loss we hope to minimize and are non-convex (or even concave) in the model logits. Indeed, it is easy to identify simple settings in which minimizing these losses will result in a degenerate classifier with arbitrarily small loss but high agreement. Instead, we derive a new loss which satisfies the above desiderata and thus serves as a more principled approach to maximizing disagreement.

\begin{definition}
    The \emph{disagreement logistic loss} of a classifier $h$ on a labeled sample $(x,y)$ is defined as
    \begin{align*}
        \disloss(h, x, y) &:= \frac{1}{\log 2} \log\left( 1 + \exp\left( h(x)_y - \frac{1}{|\gY|-1} \sum_{\hat y\neq y} h(x)_{\hat y} \right)\right).
    \end{align*}
\end{definition}
\begin{fact}
    The disagreement logistic loss is convex in $h(x)$ and upper bounds the 0-1 disagreement loss (i.e., $\mathbf{1}\{\argmax_{\hat y} h(x)_{\hat y} = y\}$). For binary classification, the disagreement logistic loss is equivalent to the logistic loss with the label flipped.
\end{fact}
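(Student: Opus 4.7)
The plan is to verify the three claims of the fact separately by direct calculation; none requires nontrivial machinery, so this is really a checklist rather than a proof with a hard step.

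For convexity, I would observe that the softplus function $z \mapsto \log(1+\exp(z))$ is convex and non-decreasing (its second derivative is $\sigma(z)(1-\sigma(z)) \geq 0$), while the inner argument $z(h) := h(x)_y - \frac{1}{|\gY|-1}\sum_{\hat y \neq y} h(x)_{\hat y}$ is affine in the logit vector $h(x)$. The composition of a convex non-decreasing function with an affine function is convex, and multiplication by the positive constant $\frac{1}{\log 2}$ preserves convexity.

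For the upper bound on $\mathbf{1}\{\argmax_{\hat y} h(x)_{\hat y} = y\}$, I would split on whether $y$ is the argmax of $h(x)$. If it is, then every $h(x)_{\hat y} \leq h(x)_y$, so their average over $\hat y \neq y$ is at most $h(x)_y$, giving $z(h) \geq 0$; thus $1 + \exp(z(h)) \geq 2$ and $\disloss(h,x,y) \geq 1$, matching the indicator. If $y$ is not the argmax the indicator is $0$ and the loss is trivially non-negative since $1 + \exp(z(h)) \geq 1$.

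For the binary equivalence, take $|\gY| = 2$ with classes labeled $\{0,1\}$; the average $\frac{1}{|\gY|-1}\sum_{\hat y \neq y} h(x)_{\hat y}$ collapses to the single term $h(x)_{1-y}$, so $z(h) = h(x)_y - h(x)_{1-y}$. Writing $f(x) := h(x)_1 - h(x)_0$ for the usual scalar discriminant, the standard logistic loss at label $y$ is $\log(1 + \exp(-(2y-1)f(x)))$, so flipping the label to $1-y$ yields $\log(1 + \exp((2y-1)f(x))) = \log(1 + \exp(z(h)))$, matching $\disloss$ up to the $\frac{1}{\log 2}$ rescaling. The only mild obstacle I foresee is aligning sign and normalization conventions in part three; the other two parts follow immediately from standard softplus properties.
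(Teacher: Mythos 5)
Your verification is correct, and it matches the argument the paper implicitly intends: the Fact is stated without proof, and the appendix only remarks that averaging the incorrect logits and pushing the exponential outside the sum is what yields convexity and the 0-1 upper bound, which is exactly what your softplus-composed-with-affine and argmax case analysis make precise. All three parts check out, including the observation that the binary "equivalence" holds up to the $\frac{1}{\log 2}$ normalization (which is consistent with the paper's own normalization convention for $\logloss$).
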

We expect that $\disloss$ can serve as a useful drop-in replacement for any future algorithm which requires maximizing disagreement in a principled manner. We combine $\logloss$ and $\disloss$ to arrive at the empirical disagreement discrepancy objective:
\begin{align*}
    \hat\gL_\Delta(h') := \frac{1}{|\hat\src|} \sum_{x\in\hat\src} \logloss(h', x, \hat h(x)) + \frac{1}{|\hat\trg|} \sum_{x\in\hat\trg} \disloss(h', x, \hat h(x)).
\end{align*}
By construction, $1-\hat\gL_\Delta(h')$ is concave and bounds $\hat\Delta(\hat h, h')$ from below. However, note that the representations are already optimized for accuracy on $\src$, which suggests that predictions will have low entropy and that the $\nicefrac{1}{\log |\gY|}$ scaling is unnecessary for balancing the two terms. We therefore drop the scaling factor, simply using standard cross-entropy; this often leads to higher discrepancy. In practice we optimize this objective with multiple initializations and hyperparameters and select the solution with the largest empirical discrepancy on a holdout set to ensure a conservative bound. Experimentally, we find that replacing $\disloss$ with either of the surrogate losses from \citet{chuang2020estimating, pagliardini2023dbat} results in smaller discrepancy; we present these results in \cref{app:disloss}.

\paragraph{Tightening the bound by optimizing over the logits.} Looking at \cref{thm:bound}, it is clear that the value of the bound will decrease as the capacity of the hypothesis class is restricted. Since the number of features is large, one may expect that \cref{strengthened-assumption} holds even for a reduced feature set. In particular, it is well documented that deep networks optimized with stochastic gradient descent learn representations with small effective rank, often not much more than the number of classes \citep{arora2018optimization, arora2019implicit, pezeshki2021gradient, huh2022lowrank}. This suggests that the logits themselves should contain most of the features' information about $\src$ and $\trg$ and that using the full feature space is unnecessarily conservative. To test this, we evaluate \disdis on the full features, the logits output by $\hat h$, and various fractions of the top principal components (PCs) of the features. We observe that using logits indeed results in tighter error bounds \emph{while still remaining valid}---in contrast, using fewer top PCs also results in smaller error bounds, but at some point they become invalid (\cref{fig:reduce_PCs}). The bounds we report in this work are thus evaluated on the logits of $\hat h$, except where we provide explicit comparisons in \cref{sec:expts}.

\paragraph{Identifying the ideal number of PCs via a ``validity score''.}
Even though reducing the feature dimensionality eventually results in an invalid bound, it is tempting to consider how we may identify approximately when this occurs, which could give a more accurate (though less conservative) prediction. We find that \emph{the optimization trajectory itself} provides meaningful signal about this change. Specifically, \cref{fig:opt_trajectory} shows that for feature sets which are not overly restrictive, the critic very rapidly ascends to the maximum source agreement, then slowly begins overfitting. For much more restrictive feature sets (i.e., fewer PCs), the critic optimizes much more slowly, suggesting that we have reached the point where we are artificially restricting $\gH$ and therefore underestimating the disagreement discrepancy. We design a ``validity score'' which captures this phenomenon, and we observe that it is roughly linearly correlated with the tightness of the eventual bound (\cref{fig:validity_score}). Though the score is by no means perfect, we can evaluate \disdis with successively fewer PCs and only retain those above a certain score threshold, reducing the average prediction error while remaining reasonably conservative (\cref{fig:min_ratio_threshold}). For further details, see \cref{app:validity_score}.

\section{Experiments}
\label{sec:expts}

\paragraph{Datasets.} We conduct experiments across 11 vision benchmark datasets for distribution shift on datasets that span applications in object classification, satellite imagery, and medicine. We use
four BREEDs datasets: \citep{santurkar2020breeds} 
{Entity13}, {Entity30}, {Nonliving26}, and {Living17}; {FMoW} \citep{christie2018functional} and Camelyon \citep{bandi2018detection} from {WILDS} \citep{wilds2021}; Officehome \citep{venkateswara2017deep}; {Visda} \citep{peng2018syn2real, visda2017}; CIFAR10, CIFAR100 \citep{krizhevsky2009learning}; and Domainet \citep{peng2019moment}. 
Each of these datasets consists of multiple domains with different types of natural and synthetic shifts.
We consider subpopulation shift
and natural shifts induced due to differences in the data collection process of ImageNet, i.e., ImageNetv2 \citep{recht2019imagenet} and a combination of both. 
For CIFAR10 and CIFAR100 we evaluate natural shifts due to variations in replication studies \citep{recht2018cifar10.1} and common corruptions \citep{hendrycks2019benchmarking}. 
For all datasets, we use the same source and target domains commonly used in previous studies \citep{garg2022RLSbench, sagawa2021extending}. We provide precise details about the distribution shifts considered in \cref{app:exp_details}. Because distribution shifts vary widely in scope, prior evaluations which focus on only one specific type of shift (e.g., corruptions) often do not convey the full story. \textbf{We therefore emphasize the need for more comprehensive evaluations across many different types of shifts and training methods}, as we present here.

\paragraph{Experimental setup and protocols.~~} Along with source-only training with ERM,
we experiment with Unsupervised Domain Adaptation (UDA) 
methods that aim to improve target performance with unlabeled target data (FixMatch \citep{sohn2020fixmatch}, DANN \cite{ganin2016domain}, CDAN \citep{long2018conditional}, and
BN-adapt \citep{li2016revisiting}). 
We experiment with  Densenet121~\citep{huang2017densely} and
Resnet18/Resnet50 \citep{he2016deep} pretrained on ImageNet.
For source-only ERM, as with other methods, we default to using strong augmentations: 
random horizontal flips, random crops,
as well as Cutout \citep{devries2017improved} 
and RandAugment \citep{cubuk2020randaugment}.   
Unless otherwise specified, we default to full finetuning for source-only ERM and UDA methods.
We use source hold-out 
performance to pick 
the best hyperparameters for the UDA methods, since we lack labeled validation data from the target distribution. 
For all of these methods, we fix the algorithm-specific hyperparameters to their original recommendations
following the experimental protocol in \citep{garg2022RLSbench}.
For more details, see \cref{app:exp_details}.

\paragraph{Methods evaluated.}  
We compare \disdis to four competitive baselines: \emph{Average Confidence} (AC; \citep{guo2017calibration}), \emph{Difference of Confidences} (DoC; \citep{guillory2021predicting}), \emph{Average Thresholded Confidence} (ATC; \citep{garg2022leveraging}), and \emph{Confidence Optimal Transport} (COT; \citep{lu2023predicting}). We give detailed descriptions of these methods in \cref{app:exp_details}. For all methods, we implement post-hoc calibration on validation source data with temperature scaling \citep{guo2017calibration}, which has been shown to improve performance. For \disdis, we report bounds evaluated both on the full features and on the logits of $\hat h$ as described in \cref{sec:how-to-maximize}. Unless specified otherwise, we set $\delta=.01$ everywhere. We also experiment with dropping the lower order concentration term in \cref{thm:bound}, using only the sample average. Though this is of course no longer a conservative bound, we find it is an excellent predictor of test error.

\paragraph{Metrics for evaluation.} 
We report the standard prediction metric, \emph{mean absolute error} (MAE). As our emphasis is on conservative error bounds, we also report the \emph{coverage}, i.e. the fraction of predictions for which the true error does not exceed the predicted error. Finally, we measure the conditional average overestimation: this is the MAE among predictions which overestimate the accuracy.

\begin{table}[t]
    \begin{adjustbox}{width=.72\columnwidth,center}
    \centering
    \tabcolsep=0.12cm
    \renewcommand{\arraystretch}{1.2}
    \begin{tabular}{lccccccccc}
    \toprule
    {} && \multicolumn{2}{c}{MAE $(\downarrow)$} && \multicolumn{2}{c}{Coverage $(\uparrow)$} && \multicolumn{2}{c}{Overest. $(\downarrow)$} \\
    \multicolumn{1}{r}{\textbf{DA?}} &&          \ding{55} & \ding{51} &&             \ding{55} & \ding{51} &&               \ding{55} & \ding{51} \\
    Prediction Method             &&                    &           &&                      &           &&                         &           \\
    \midrule
    AC \citep{guo2017calibration}                            &&            0.1055 &    0.1077 &&                0.1222 &    0.0167 &&                  0.1178 &    0.1089 \\
    DoC \citep{guillory2021predicting}                           &&             0.1046 &    0.1091 &&                0.1667 &    0.0167 &&                  0.1224 &    0.1104 \\
    ATC NE \citep{garg2022leveraging}                        &&             0.0670 &    0.0838 &&                0.3000 &    0.1833 &&                  0.0842 &    0.0999 \\
    COT \citep{lu2023predicting}                           &&             0.0689 &    0.0812 &&                0.2556 &    0.1833 &&                  0.0851 &    0.0973 \\
    \midrule
    \disdis (Features)            &&             0.2807 &    0.1918 &&                1.0000 &    1.0000 &&                  0.0000 &    0.0000 \\
    \disdis (Logits)              &&             0.1504 &    0.0935 &&                0.9889 &    0.7500 &&                  0.0011 &    0.0534 \\
    \disdis (Logits w/o $\delta$) &&             0.0829 &    0.0639 &&                0.7556 &    0.4167 &&                  0.0724 &    0.0888 \\
    \bottomrule

    \end{tabular}
    \end{adjustbox}
    \vspace{7pt}
    \caption{\textbf{Comparing the \disdis bound to prior methods for predicting accuracy.} DA denotes if the representations were learned via a domain-adversarial algorithm. In addition to mean absolute error (MAE), we report what fraction of predictions correctly bound the true error (Coverage), and the average prediction error among shifts whose accuracy is overestimated (Overest.). \disdis has reasonably competitive MAE but substantially higher coverage. By dropping the concentration term in \cref{thm:bound} we can do even better, at some cost to coverage.}
    \label{table:main}
\end{table}

\paragraph{Results.~~}
Reported metrics for all methods can be found in \cref{table:main}. We aggregate results over all datasets, shifts, and training methods---we stratify only by whether the training method is domain-adversarial, as this affects the validity of \cref{strengthened-assumption}. We find that \disdis achieves competitive MAE while maintaining substantially higher coverage, even for domain-adversarial features. When it does overestimate accuracy, it does so by much less, implying that it is ideal for conservative estimation even when any given error bound is not technically satisfied. Dropping the concentration term performs even better (sometimes beating the baselines), at the cost of some coverage. This suggests that efforts to better estimate the true maximum discrepancy may yield even better predictors. We also show scatter plots to visualize performance on individual distribution shifts, plotting each source-target pair as a single point. For these too we report separately the results for domain-adversarial (\cref{fig:dann}) and non-domain-adversarial methods (\cref{fig:dis2_vs_others}). To avoid clutter, these two plots do not include DoC, as it performed comparably to AC. Some of the BREEDS shifts contain as few as 68 test samples, which explains why accuracy is heavily underestimated for some shifts, as the lower order term in the bound dominates at this sample size.

\cref{fig:dis2_coverage_scatter} displays additional scatter plots which allow for a direct comparison of the variants of \disdis. Finally, \cref{fig:dis2_coverage_violation} plots the observed violation rate (i.e. $1-$coverage) of \disdis on non-domain-adversarial methods for varying $\delta$. We observe that it lies at or below the line $y=x$, meaning the probabilistic bound provided by \cref{thm:bound} holds across a range of failure probabilities. Thus we see that our probabilistic bound is empirically valid all of the time---not in the sense that each individual shift's error is upper bounded, but rather that the desired violation rate is always satisfied.

\begin{figure}[t]
    \centering
    \begin{subfigure}{.7\linewidth}
        \includegraphics[width = \linewidth]{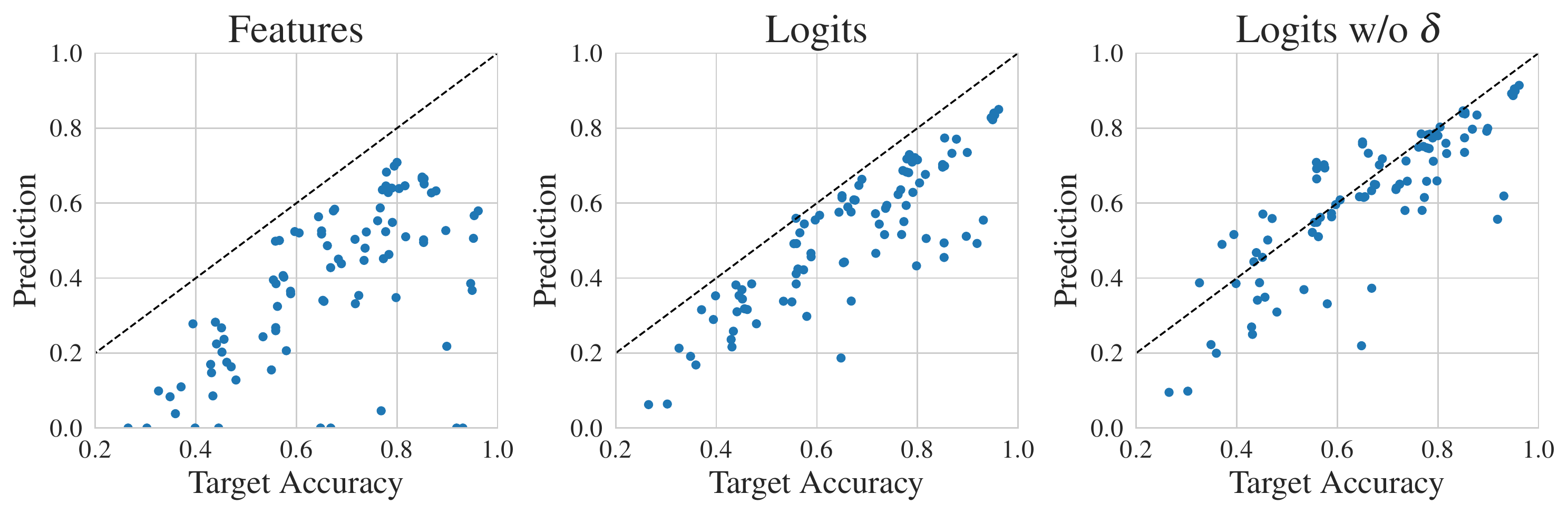}
        \caption{}
        \label{fig:dis2_coverage_scatter}
    \end{subfigure}
    \begin{subfigure}{.245\linewidth}
        \includegraphics[width = \linewidth]{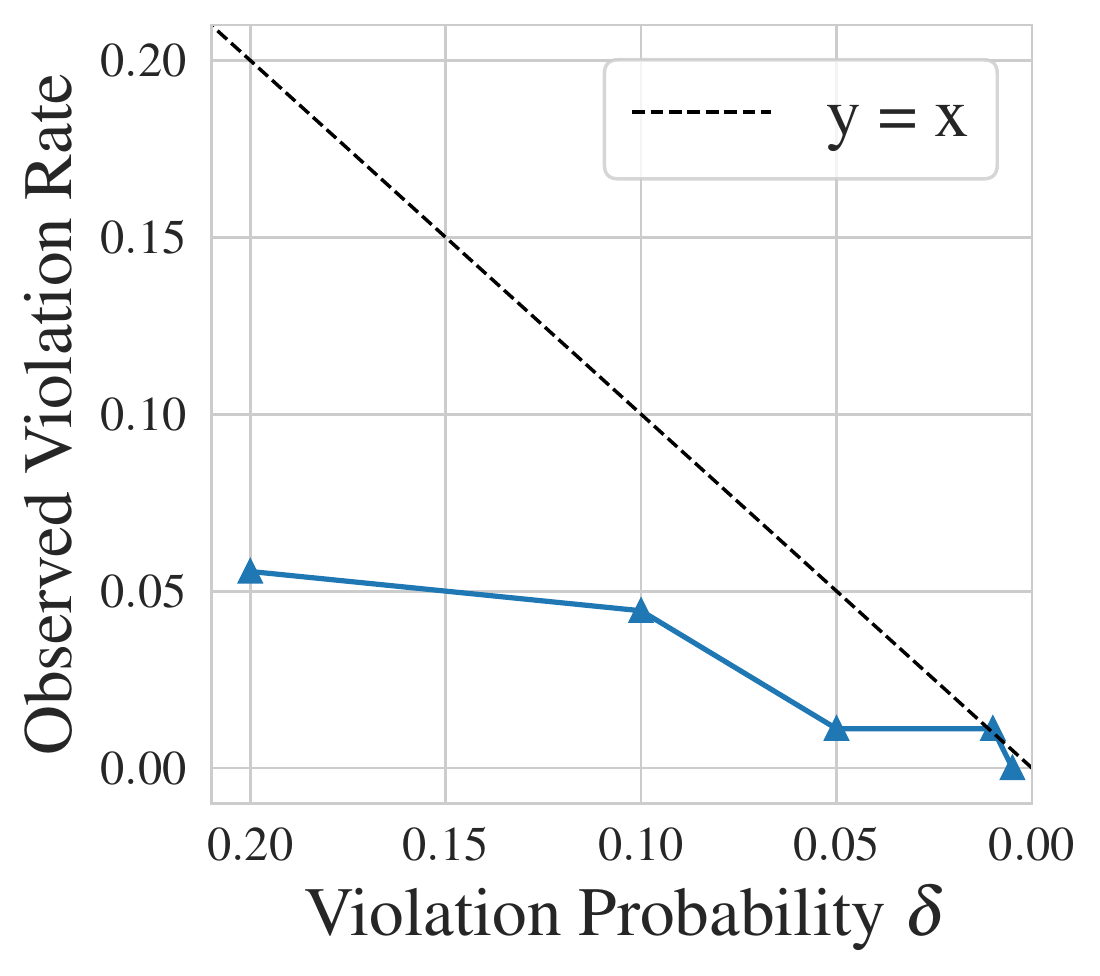}
        \caption{}
        \label{fig:dis2_coverage_violation}
    \end{subfigure}
    \caption{\textbf{(a):} Scatter plots depicting \disdis estimated bound vs. true error for a variety of shifts. ``w/o $\delta$'' indicates that the lower-order term of \cref{thm:bound} has been dropped. \textbf{(b):} Observed bound violation rate vs. desired probability $\delta$. Observe that the true rate lies at or below $y=x$ across a range of values.}
    \label{fig:dis2_coverage}
    \vspace{-8pt}
\end{figure}

\paragraph{Strengthening the baselines to improve coverage.} Since the baselines we consider in this work prioritize predictive accuracy over conservative estimates, their coverage can possibly be improved without too much increase in error. We explore this option using LOOCV: for a desired coverage, we learn a parameter to either scale or shift a method's prediction to achieve that level of coverage on all but one of the datasets. We then evaluate the method on all shifts of the remaining dataset, and we repeat this for each dataset. \cref{app:loocv} reports the results for varying coverage levels. We find that (i) the baselines do not achieve the desired coverage on the held out data, though they get somewhat close; and (ii) the adjustment causes them to suffer higher MAE than \disdis. Thus \disdis is on the Pareto frontier of MAE and coverage, and is preferable when conservative bounds are desirable. We believe identifying alternative methods of post-hoc prediction adjustment is a promising future direction.

\section{Conclusion}
The ability to evaluate \emph{trustworthy}, non-vacuous error bounds for deep neural networks under distribution shift remains an extremely important open problem. Due to the wide variety of real-world shifts and the complexity of modern data, restrictive a priori assumptions on the distribution (i.e., before observing any data from the shift of interest) seem unlikely to be fruitful. On the other hand, prior methods which estimate accuracy using extra information---such as unlabeled test samples---often rely on opaque conditions whose likelihood of being satisfied is difficult to predict, and so they sometimes provide large overestimates of test accuracy with no warning signs.

This work attempts to bridge this gap with a simple, intuitive condition and a new disagreement loss which together result in competitive error \emph{prediction}, while simultaneously providing an (almost) provable probabilistic error \emph{bound}. We also study how the process of evaluating the bound (e.g., the optimization landscape) can provide even more useful signal, enabling better predictive accuracy. We expect there is potential to push further in each of these directions, hopefully extending the current accuracy-reliability Pareto frontier for test error bounds under distribution shift.

\section*{Acknowledgements}
Thanks to Sam Sokota, Bingbin Liu, Yuchen Li, Yiding Jiang, Zack Lipton, Roni Rosenfeld, and Andrej Risteski for helpful comments.

\bibliography{main}

\begin{thebibliography}{73}
\providecommand{\natexlab}[1]{#1}
\providecommand{\url}[1]{\texttt{#1}}
\expandafter\ifx\csname urlstyle\endcsname\relax
  \providecommand{\doi}[1]{doi: #1}\else
  \providecommand{\doi}{doi: \begingroup \urlstyle{rm}\Url}\fi

\bibitem[Arora et~al.(2018)Arora, Cohen, and Hazan]{arora2018optimization}
Sanjeev Arora, Nadav Cohen, and Elad Hazan.
\newblock On the optimization of deep networks: Implicit acceleration by
  overparameterization.
\newblock In \emph{International Conference on Machine Learning}, pages
  244--253. PMLR, 2018.

\bibitem[Arora et~al.(2019)Arora, Cohen, Hu, and Luo]{arora2019implicit}
Sanjeev Arora, Nadav Cohen, Wei Hu, and Yuping Luo.
\newblock Implicit regularization in deep matrix factorization.
\newblock \emph{Advances in Neural Information Processing Systems}, 32, 2019.

\bibitem[Baek et~al.(2022)Baek, Jiang, Raghunathan, and
  Kolter]{baek2022agreement}
Christina Baek, Yiding Jiang, Aditi Raghunathan, and J~Zico Kolter.
\newblock Agreement-on-the-line: Predicting the performance of neural networks
  under distribution shift.
\newblock In \emph{Advances in Neural Information Processing Systems}, 2022.

\bibitem[Bandi et~al.(2018)Bandi, Geessink, Manson, Van~Dijk, Balkenhol,
  Hermsen, Bejnordi, Lee, Paeng, Zhong, et~al.]{bandi2018detection}
Peter Bandi, Oscar Geessink, Quirine Manson, Marcory Van~Dijk, Maschenka
  Balkenhol, Meyke Hermsen, Babak~Ehteshami Bejnordi, Byungjae Lee, Kyunghyun
  Paeng, Aoxiao Zhong, et~al.
\newblock From detection of individual metastases to classification of lymph
  node status at the patient level: the camelyon17 challenge.
\newblock \emph{IEEE Transactions on Medical Imaging}, 2018.

\bibitem[Bartlett et~al.(2017)Bartlett, Foster, and
  Telgarsky]{bartlett2017spectrally}
Peter~L Bartlett, Dylan~J Foster, and Matus~J Telgarsky.
\newblock Spectrally-normalized margin bounds for neural networks.
\newblock In \emph{Advances in neural information processing systems}, pages
  6240--6249, 2017.

\bibitem[Ben-David et~al.(2006)Ben-David, Blitzer, Crammer, and
  Pereira]{bendavid2007analysis}
Shai Ben-David, John Blitzer, Koby Crammer, and Fernando Pereira.
\newblock Analysis of representations for domain adaptation.
\newblock In \emph{Advances in Neural Information Processing Systems},
  volume~19. MIT Press, 2006.

\bibitem[Ben-David et~al.(2010)Ben-David, Blitzer, Crammer, Kulesza, Pereira,
  and Vaughan]{bendavid2010theory}
Shai Ben-David, John Blitzer, Koby Crammer, Alex Kulesza, Fernando Pereira, and
  Jennifer~Wortman Vaughan.
\newblock A theory of learning from different domains.
\newblock \emph{Machine learning}, 79:\penalty0 151--175, 2010.

\bibitem[Chen et~al.(2021{\natexlab{a}})Chen, Liu, Avci, Wu, Liang, and
  Jha]{chen2021detecting}
Jiefeng Chen, Frederick Liu, Besim Avci, Xi~Wu, Yingyu Liang, and Somesh Jha.
\newblock Detecting errors and estimating accuracy on unlabeled data with
  self-training ensembles.
\newblock \emph{Advances in Neural Information Processing Systems},
  34:\penalty0 14980--14992, 2021{\natexlab{a}}.

\bibitem[Chen et~al.(2021{\natexlab{b}})Chen, Goel, Sohoni, Poms, Fatahalian,
  and R{\'e}]{chen2021mandoline}
Mayee Chen, Karan Goel, Nimit~S Sohoni, Fait Poms, Kayvon Fatahalian, and
  Christopher R{\'e}.
\newblock Mandoline: Model evaluation under distribution shift.
\newblock In \emph{International Conference on Machine Learning}, pages
  1617--1629. PMLR, 2021{\natexlab{b}}.

\bibitem[Chen et~al.(2022)Chen, Rosenfeld, Sellke, Ma, and
  Risteski]{chen2022iterative}
Yining Chen, Elan Rosenfeld, Mark Sellke, Tengyu Ma, and Andrej Risteski.
\newblock Iterative feature matching: Toward provable domain generalization
  with logarithmic environments.
\newblock In \emph{Advances in Neural Information Processing Systems},
  volume~35, pages 1725--1736. Curran Associates, Inc., 2022.

\bibitem[Christie et~al.(2018)Christie, Fendley, Wilson, and
  Mukherjee]{christie2018functional}
Gordon Christie, Neil Fendley, James Wilson, and Ryan Mukherjee.
\newblock Functional map of the world.
\newblock In \emph{Proceedings of the IEEE Conference on Computer Vision and
  Pattern Recognition}, 2018.

\bibitem[Chuang et~al.(2020)Chuang, Torralba, and
  Jegelka]{chuang2020estimating}
Ching-Yao Chuang, Antonio Torralba, and Stefanie Jegelka.
\newblock Estimating generalization under distribution shifts via
  domain-invariant representations.
\newblock \emph{International conference on machine learning}, 2020.

\bibitem[Cubuk et~al.(2020)Cubuk, Zoph, Shlens, and Le]{cubuk2020randaugment}
Ekin~D Cubuk, Barret Zoph, Jonathon Shlens, and Quoc~V Le.
\newblock Randaugment: Practical automated data augmentation with a reduced
  search space.
\newblock In \emph{Proceedings of the IEEE/CVF conference on computer vision
  and pattern recognition workshops}, pages 702--703, 2020.

\bibitem[Deng and Zheng(2021)]{deng2021labels}
Weijian Deng and Liang Zheng.
\newblock Are labels always necessary for classifier accuracy evaluation?
\newblock In \emph{Proceedings of the IEEE/CVF Conference on Computer Vision
  and Pattern Recognition}, pages 15069--15078, 2021.

\bibitem[Deng et~al.(2021)Deng, Gould, and Zheng]{deng2021does}
Weijian Deng, Stephen Gould, and Liang Zheng.
\newblock What does rotation prediction tell us about classifier accuracy under
  varying testing environments?
\newblock \emph{arXiv preprint arXiv:2106.05961}, 2021.

\bibitem[DeVries and Taylor(2017)]{devries2017improved}
Terrance DeVries and Graham~W Taylor.
\newblock Improved regularization of convolutional neural networks with cutout.
\newblock \emph{arXiv preprint arXiv:1708.04552}, 2017.

\bibitem[Dziugaite and Roy(2017)]{dziugaite2017computing}
Gintare~Karolina Dziugaite and Daniel~M Roy.
\newblock Computing nonvacuous generalization bounds for deep (stochastic)
  neural networks with many more parameters than training data.
\newblock \emph{arXiv preprint arXiv:1703.11008}, 2017.

\bibitem[Ganin et~al.(2016)Ganin, Ustinova, Ajakan, Germain, Larochelle,
  Laviolette, Marchand, and Lempitsky]{ganin2016domain}
Yaroslav Ganin, Evgeniya Ustinova, Hana Ajakan, Pascal Germain, Hugo
  Larochelle, Fran\c{c}ois Laviolette, Mario Marchand, and Victor Lempitsky.
\newblock Domain-adversarial training of neural networks.
\newblock \emph{J. Mach. Learn. Res.}, 17\penalty0 (1):\penalty0 2096–2030,
  jan 2016.
\newblock ISSN 1532-4435.

\bibitem[Gardner et~al.(2018)Gardner, Pleiss, Weinberger, Bindel, and
  Wilson]{gardner2018gpytorch}
Jacob Gardner, Geoff Pleiss, Kilian~Q Weinberger, David Bindel, and Andrew~G
  Wilson.
\newblock Gpytorch: Blackbox matrix-matrix gaussian process inference with gpu
  acceleration.
\newblock In \emph{Advances in Neural Information Processing Systems}, pages
  7576--7586, 2018.

\bibitem[Garg et~al.(2021)Garg, Balakrishnan, Kolter, and Lipton]{garg2021ratt}
Saurabh Garg, Sivaraman Balakrishnan, J~Zico Kolter, and Zachary~C Lipton.
\newblock Ratt: Leveraging unlabeled data to guarantee generalization.
\newblock \emph{arXiv preprint arXiv:2105.00303}, 2021.

\bibitem[Garg et~al.(2022)Garg, Balakrishnan, Lipton, Neyshabur, and
  Sedghi]{garg2022leveraging}
Saurabh Garg, Sivaraman Balakrishnan, Zachary~Chase Lipton, Behnam Neyshabur,
  and Hanie Sedghi.
\newblock Leveraging unlabeled data to predict out-of-distribution performance.
\newblock In \emph{International Conference on Learning Representations}, 2022.

\bibitem[Garg et~al.(2023)Garg, Erickson, Sharpnack, Smola, Balakrishnan, and
  Lipton]{garg2022RLSbench}
Saurabh Garg, Nick Erickson, James Sharpnack, Alex Smola, Siva Balakrishnan,
  and Zachary Lipton.
\newblock Rlsbench: A large-scale empirical study of domain adaptation under
  relaxed label shift.
\newblock In \emph{International Conference on Machine Learning (ICML)}, 2023.

\bibitem[Guillory et~al.(2021)Guillory, Shankar, Ebrahimi, Darrell, and
  Schmidt]{guillory2021predicting}
Devin Guillory, Vaishaal Shankar, Sayna Ebrahimi, Trevor Darrell, and Ludwig
  Schmidt.
\newblock Predicting with confidence on unseen distributions.
\newblock \emph{arXiv preprint arXiv:2107.03315}, 2021.

\bibitem[Guo et~al.(2017)Guo, Pleiss, Sun, and Weinberger]{guo2017calibration}
Chuan Guo, Geoff Pleiss, Yu~Sun, and Kilian~Q Weinberger.
\newblock On calibration of modern neural networks.
\newblock In \emph{International Conference on Machine Learning (ICML)}, 2017.

\bibitem[He et~al.(2016)He, Zhang, Ren, and Sun]{he2016deep}
Kaiming He, Xiangyu Zhang, Shaoqing Ren, and Jian Sun.
\newblock {Deep Residual Learning for Image Recognition}.
\newblock In \emph{Computer Vision and Pattern Recognition (CVPR)}, 2016.

\bibitem[Hendrycks and Dietterich(2019)]{hendrycks2019benchmarking}
Dan Hendrycks and Thomas Dietterich.
\newblock Benchmarking neural network robustness to common corruptions and
  perturbations.
\newblock \emph{arXiv preprint arXiv:1903.12261}, 2019.

\bibitem[Huang et~al.(2017)Huang, Liu, Van Der~Maaten, and
  Weinberger]{huang2017densely}
Gao Huang, Zhuang Liu, Laurens Van Der~Maaten, and Kilian~Q Weinberger.
\newblock Densely connected convolutional networks.
\newblock In \emph{Proceedings of the IEEE conference on computer vision and
  pattern recognition}, pages 4700--4708, 2017.

\bibitem[Huh et~al.(2022)Huh, Mobahi, Zhang, Cheung, Agrawal, and
  Isola]{huh2022lowrank}
Minyoung Huh, Hossein Mobahi, Richard Zhang, Brian Cheung, Pulkit Agrawal, and
  Phillip Isola.
\newblock The low-rank simplicity bias in deep networks, 2022.
\newblock URL \url{https://openreview.net/forum?id=dn4B7Mes2z}.

\bibitem[Hume(2000)]{hume2000enquiry}
David Hume.
\newblock \emph{An enquiry concerning human understanding: A critical edition},
  volume~3.
\newblock Oxford University Press on Demand, 2000.

\bibitem[Jiang et~al.(2022{\natexlab{a}})Jiang, Shu, Wang, and
  Long]{jiang2022transferability}
Junguang Jiang, Yang Shu, Jianmin Wang, and Mingsheng Long.
\newblock Transferability in deep learning: A survey, 2022{\natexlab{a}}.

\bibitem[Jiang et~al.(2022{\natexlab{b}})Jiang, Nagarajan, Baek, and
  Kolter]{jiang2022assessing}
Yiding Jiang, Vaishnavh Nagarajan, Christina Baek, and J~Zico Kolter.
\newblock Assessing generalization of {SGD} via disagreement.
\newblock In \emph{International Conference on Learning Representations},
  2022{\natexlab{b}}.

\bibitem[Kang et~al.(2020)Kang, Xie, Rohrbach, Yan, Gordo, Feng, and
  Kalantidis]{Kang2020Decoupling}
Bingyi Kang, Saining Xie, Marcus Rohrbach, Zhicheng Yan, Albert Gordo, Jiashi
  Feng, and Yannis Kalantidis.
\newblock Decoupling representation and classifier for long-tailed recognition.
\newblock In \emph{International Conference on Learning Representations}, 2020.

\bibitem[Kirichenko et~al.(2022)Kirichenko, Izmailov, and
  Wilson]{kirichenko2022last}
Polina Kirichenko, Pavel Izmailov, and Andrew~Gordon Wilson.
\newblock Last layer re-training is sufficient for robustness to spurious
  correlations.
\newblock \emph{arXiv preprint arXiv:2204.02937}, 2022.

\bibitem[Koh et~al.(2021)Koh, Sagawa, Marklund, Xie, Zhang, Balsubramani, Hu,
  Yasunaga, Phillips, Gao, Lee, David, Stavness, Guo, Earnshaw, Haque, Beery,
  Leskovec, Kundaje, Pierson, Levine, Finn, and Liang]{wilds2021}
Pang~Wei Koh, Shiori Sagawa, Henrik Marklund, Sang~Michael Xie, Marvin Zhang,
  Akshay Balsubramani, Weihua Hu, Michihiro Yasunaga, Richard~Lanas Phillips,
  Irena Gao, Tony Lee, Etienne David, Ian Stavness, Wei Guo, Berton~A.
  Earnshaw, Imran~S. Haque, Sara Beery, Jure Leskovec, Anshul Kundaje, Emma
  Pierson, Sergey Levine, Chelsea Finn, and Percy Liang.
\newblock {WILDS}: A benchmark of in-the-wild distribution shifts.
\newblock In \emph{International Conference on Machine Learning (ICML)}, 2021.

\bibitem[Krizhevsky and Hinton(2009)]{krizhevsky2009learning}
Alex Krizhevsky and Geoffrey Hinton.
\newblock {Learning Multiple Layers of Features from Tiny Images}.
\newblock Technical report, Citeseer, 2009.

\bibitem[Li et~al.(2016)Li, Wang, Shi, Liu, and Hou]{li2016revisiting}
Yanghao Li, Naiyan Wang, Jianping Shi, Jiaying Liu, and Xiaodi Hou.
\newblock Revisiting batch normalization for practical domain adaptation.
\newblock \emph{arXiv preprint arXiv:1603.04779}, 2016.

\bibitem[Long et~al.(2018)Long, Cao, Wang, and Jordan]{long2018conditional}
Mingsheng Long, Zhangjie Cao, Jianmin Wang, and Michael~I Jordan.
\newblock Conditional adversarial domain adaptation.
\newblock \emph{Advances in neural information processing systems}, 31, 2018.

\bibitem[Long and Sedghi(2019)]{long2019generalization}
Philip~M Long and Hanie Sedghi.
\newblock Generalization bounds for deep convolutional neural networks.
\newblock \emph{arXiv preprint arXiv:1905.12600}, 2019.

\bibitem[Lu et~al.(2023)Lu, Wang, Zhai, Kolouri, Campbell, and
  Sycara]{lu2023predicting}
Yuzhe Lu, Zhenlin Wang, Runtian Zhai, Soheil Kolouri, Joseph Campbell, and
  Katia Sycara.
\newblock Predicting out-of-distribution error with confidence optimal
  transport.
\newblock In \emph{ICLR 2023 Workshop on Pitfalls of limited data and
  computation for Trustworthy ML}, 2023.

\bibitem[Mansour et~al.(2009)Mansour, Mohri, and
  Rostamizadeh]{mansour2009domain}
Yishay Mansour, Mehryar Mohri, and Afshin Rostamizadeh.
\newblock Domain adaptation: Learning bounds and algorithms.
\newblock \emph{arXiv preprint arXiv:0902.3430}, 2009.

\bibitem[Nagarajan and Kolter(2019{\natexlab{a}})]{nagarajan2019deterministic}
Vaishnavh Nagarajan and J~Zico Kolter.
\newblock Deterministic pac-bayesian generalization bounds for deep networks
  via generalizing noise-resilience.
\newblock \emph{arXiv preprint arXiv:1905.13344}, 2019{\natexlab{a}}.

\bibitem[Nagarajan and Kolter(2019{\natexlab{b}})]{nagarajan2019uniform}
Vaishnavh Nagarajan and J.~Zico Kolter.
\newblock Uniform convergence may be unable to explain generalization in deep
  learning.
\newblock In \emph{Advances in Neural Information Processing Systems},
  volume~32. Curran Associates, Inc., 2019{\natexlab{b}}.

\bibitem[Nakkiran and Bansal(2019)]{nakkiran2020distributional}
Preetum Nakkiran and Yamini Bansal.
\newblock Distributional generalization: A new kind of generalization.
\newblock \emph{ArXiv}, abs/2009.08092, 2019.

\bibitem[Neyshabur(2017)]{neyshabur2017implicit}
Behnam Neyshabur.
\newblock Implicit regularization in deep learning.
\newblock \emph{arXiv preprint arXiv:1709.01953}, 2017.

\bibitem[Neyshabur et~al.(2015)Neyshabur, Tomioka, and
  Srebro]{neyshabur2015norm}
Behnam Neyshabur, Ryota Tomioka, and Nathan Srebro.
\newblock Norm-based capacity control in neural networks.
\newblock In \emph{Conference on Learning Theory}, pages 1376--1401, 2015.

\bibitem[Neyshabur et~al.(2017)Neyshabur, Bhojanapalli, McAllester, and
  Srebro]{neyshabur2017exploring}
Behnam Neyshabur, Srinadh Bhojanapalli, David McAllester, and Nathan Srebro.
\newblock Exploring generalization in deep learning.
\newblock \emph{arXiv preprint arXiv:1706.08947}, 2017.

\bibitem[Neyshabur et~al.(2018)Neyshabur, Li, Bhojanapalli, LeCun, and
  Srebro]{neyshabur2018role}
Behnam Neyshabur, Zhiyuan Li, Srinadh Bhojanapalli, Yann LeCun, and Nathan
  Srebro.
\newblock The role of over-parametrization in generalization of neural
  networks.
\newblock In \emph{International Conference on Learning Representations}, 2018.

\bibitem[Ovadia et~al.(2019)Ovadia, Fertig, Ren, Nado, Sculley, Nowozin,
  Dillon, Lakshminarayanan, and Snoek]{ovadia2019can}
Yaniv Ovadia, Emily Fertig, Jie Ren, Zachary Nado, David Sculley, Sebastian
  Nowozin, Joshua~V Dillon, Balaji Lakshminarayanan, and Jasper Snoek.
\newblock Can you trust your model's uncertainty? evaluating predictive
  uncertainty under dataset shift.
\newblock \emph{arXiv preprint arXiv:1906.02530}, 2019.

\bibitem[Pagliardini et~al.(2023)Pagliardini, Jaggi, Fleuret, and
  Karimireddy]{pagliardini2023dbat}
Matteo Pagliardini, Martin Jaggi, Fran{\c{c}}ois Fleuret, and Sai~Praneeth
  Karimireddy.
\newblock Agree to disagree: Diversity through disagreement for better
  transferability.
\newblock In \emph{The Eleventh International Conference on Learning
  Representations}, 2023.

\bibitem[Peng et~al.(2017)Peng, Usman, Kaushik, Hoffman, Wang, and
  Saenko]{visda2017}
Xingchao Peng, Ben Usman, Neela Kaushik, Judy Hoffman, Dequan Wang, and Kate
  Saenko.
\newblock Visda: The visual domain adaptation challenge, 2017.

\bibitem[Peng et~al.(2018)Peng, Usman, Saito, Kaushik, Hoffman, and
  Saenko]{peng2018syn2real}
Xingchao Peng, Ben Usman, Kuniaki Saito, Neela Kaushik, Judy Hoffman, and Kate
  Saenko.
\newblock Syn2real: A new benchmark forsynthetic-to-real visual domain
  adaptation.
\newblock \emph{arXiv preprint arXiv:1806.09755}, 2018.

\bibitem[Peng et~al.(2019)Peng, Bai, Xia, Huang, Saenko, and
  Wang]{peng2019moment}
Xingchao Peng, Qinxun Bai, Xide Xia, Zijun Huang, Kate Saenko, and Bo~Wang.
\newblock Moment matching for multi-source domain adaptation.
\newblock In \emph{Proceedings of the IEEE/CVF international conference on
  computer vision}, pages 1406--1415, 2019.

\bibitem[Pezeshki et~al.(2021)Pezeshki, Kaba, Bengio, Courville, Precup, and
  Lajoie]{pezeshki2021gradient}
Mohammad Pezeshki, Oumar Kaba, Yoshua Bengio, Aaron~C Courville, Doina Precup,
  and Guillaume Lajoie.
\newblock Gradient starvation: A learning proclivity in neural networks.
\newblock \emph{Advances in Neural Information Processing Systems},
  34:\penalty0 1256--1272, 2021.

\bibitem[Platanios et~al.(2017)Platanios, Poon, Mitchell, and
  Horvitz]{platanios2017estimating}
Emmanouil~A Platanios, Hoifung Poon, Tom~M Mitchell, and Eric Horvitz.
\newblock Estimating accuracy from unlabeled data: A probabilistic logic
  approach.
\newblock \emph{arXiv preprint arXiv:1705.07086}, 2017.

\bibitem[Platanios et~al.(2016)Platanios, Dubey, and
  Mitchell]{platanios2016estimating}
Emmanouil~Antonios Platanios, Avinava Dubey, and Tom Mitchell.
\newblock Estimating accuracy from unlabeled data: A bayesian approach.
\newblock In \emph{International Conference on Machine Learning}, pages
  1416--1425. PMLR, 2016.

\bibitem[Rahimian and Mehrotra(2019)]{rahimian2019distributionally}
Hamed Rahimian and Sanjay Mehrotra.
\newblock Distributionally robust optimization: A review.
\newblock \emph{arXiv preprint arXiv:1908.05659}, 2019.

\bibitem[Recht et~al.(2018{\natexlab{a}})Recht, Roelofs, Schmidt, and
  Shankar]{recht2018cifar}
Benjamin Recht, Rebecca Roelofs, Ludwig Schmidt, and Vaishaal Shankar.
\newblock Do cifar-10 classifiers generalize to cifar-10?
\newblock \emph{arXiv preprint arXiv:1806.00451}, 2018{\natexlab{a}}.

\bibitem[Recht et~al.(2018{\natexlab{b}})Recht, Roelofs, Schmidt, and
  Shankar]{recht2018cifar10.1}
Benjamin Recht, Rebecca Roelofs, Ludwig Schmidt, and Vaishaal Shankar.
\newblock Do cifar-10 classifiers generalize to cifar-10?
\newblock 2018{\natexlab{b}}.
\newblock \url{https://arxiv.org/abs/1806.00451}.

\bibitem[Recht et~al.(2019)Recht, Roelofs, Schmidt, and
  Shankar]{recht2019imagenet}
Benjamin Recht, Rebecca Roelofs, Ludwig Schmidt, and Vaishaal Shankar.
\newblock Do imagenet classifiers generalize to imagenet?
\newblock In \emph{International Conference on Machine Learning}, pages
  5389--5400. PMLR, 2019.

\bibitem[Rosenfeld et~al.(2021)Rosenfeld, Ravikumar, and
  Risteski]{rosenfeld2021risks}
Elan Rosenfeld, Pradeep~Kumar Ravikumar, and Andrej Risteski.
\newblock The risks of invariant risk minimization.
\newblock In \emph{International Conference on Learning Representations}, 2021.

\bibitem[Rosenfeld et~al.(2022{\natexlab{a}})Rosenfeld, Ravikumar, and
  Risteski]{rosenfeld2022domain}
Elan Rosenfeld, Pradeep Ravikumar, and Andrej Risteski.
\newblock Domain-adjusted regression or: Erm may already learn features
  sufficient for out-of-distribution generalization.
\newblock \emph{arXiv preprint arXiv:2202.06856}, 2022{\natexlab{a}}.

\bibitem[Rosenfeld et~al.(2022{\natexlab{b}})Rosenfeld, Ravikumar, and
  Risteski]{rosenfeld2022online}
Elan Rosenfeld, Pradeep Ravikumar, and Andrej Risteski.
\newblock An online learning approach to interpolation and extrapolation in
  domain generalization.
\newblock In \emph{Proceedings of The 25th International Conference on
  Artificial Intelligence and Statistics}, volume 151 of \emph{Proceedings of
  Machine Learning Research}, pages 2641--2657. PMLR, 28--30 Mar
  2022{\natexlab{b}}.

\bibitem[Russakovsky et~al.(2015)Russakovsky, Deng, Su, Krause, Satheesh, Ma,
  Huang, Karpathy, Khosla, Bernstein, et~al.]{russakovsky2015imagenet}
Olga Russakovsky, Jia Deng, Hao Su, Jonathan Krause, Sanjeev Satheesh, Sean Ma,
  Zhiheng Huang, Andrej Karpathy, Aditya Khosla, Michael Bernstein, et~al.
\newblock Imagenet large scale visual recognition challenge.
\newblock \emph{International journal of computer vision}, 115\penalty0
  (3):\penalty0 211--252, 2015.

\bibitem[Sagawa et~al.(2021)Sagawa, Koh, Lee, Gao, Xie, Shen, Kumar, Hu,
  Yasunaga, Marklund, Beery, David, Stavness, Guo, Leskovec, Saenko, Hashimoto,
  Levine, Finn, and Liang]{sagawa2021extending}
Shiori Sagawa, Pang~Wei Koh, Tony Lee, Irena Gao, Sang~Michael Xie, Kendrick
  Shen, Ananya Kumar, Weihua Hu, Michihiro Yasunaga, Henrik Marklund, Sara
  Beery, Etienne David, Ian Stavness, Wei Guo, Jure Leskovec, Kate Saenko,
  Tatsunori Hashimoto, Sergey Levine, Chelsea Finn, and Percy Liang.
\newblock Extending the wilds benchmark for unsupervised adaptation.
\newblock In \emph{NeurIPS Workshop on Distribution Shifts}, 2021.

\bibitem[Santurkar et~al.(2020)Santurkar, Tsipras, and
  Madry]{santurkar2020breeds}
Shibani Santurkar, Dimitris Tsipras, and Aleksander Madry.
\newblock Breeds: Benchmarks for subpopulation shift.
\newblock \emph{arXiv preprint arXiv:2008.04859}, 2020.

\bibitem[Sohn et~al.(2020)Sohn, Berthelot, Carlini, Zhang, Zhang, Raffel,
  Cubuk, Kurakin, and Li]{sohn2020fixmatch}
Kihyuk Sohn, David Berthelot, Nicholas Carlini, Zizhao Zhang, Han Zhang,
  Colin~A Raffel, Ekin~Dogus Cubuk, Alexey Kurakin, and Chun-Liang Li.
\newblock Fixmatch: Simplifying semi-supervised learning with consistency and
  confidence.
\newblock \emph{Advances in Neural Information Processing Systems}, 33, 2020.

\bibitem[Sun et~al.(2016)Sun, Feng, and Saenko]{sun2016return}
Baochen Sun, Jiashi Feng, and Kate Saenko.
\newblock Return of frustratingly easy domain adaptation.
\newblock In \emph{Proceedings of the AAAI Conference on Artificial
  Intelligence}, 2016.

\bibitem[Torralba et~al.(2008)Torralba, Fergus, and
  Freeman]{torralba2008tinyimages}
Antonio Torralba, Rob Fergus, and William~T. Freeman.
\newblock 80 million tiny images: A large data set for nonparametric object and
  scene recognition.
\newblock \emph{IEEE Transactions on Pattern Analysis and Machine
  Intelligence}, 30\penalty0 (11):\penalty0 1958--1970, 2008.

\bibitem[Venkateswara et~al.(2017)Venkateswara, Eusebio, Chakraborty, and
  Panchanathan]{venkateswara2017deep}
Hemanth Venkateswara, Jose Eusebio, Shayok Chakraborty, and Sethuraman
  Panchanathan.
\newblock Deep hashing network for unsupervised domain adaptation.
\newblock In \emph{Proceedings of the IEEE Conference on Computer Vision and
  Pattern Recognition}, pages 5018--5027, 2017.

\bibitem[Zhang et~al.(2016)Zhang, Bengio, Hardt, Recht, and
  Vinyals]{zhang2016understanding}
Chiyuan Zhang, Samy Bengio, Moritz Hardt, Benjamin Recht, and Oriol Vinyals.
\newblock Understanding deep learning requires rethinking generalization.
\newblock \emph{arXiv preprint arXiv:1611.03530}, 2016.

\bibitem[Zhang(2019)]{zhang2019shiftinvar}
Richard Zhang.
\newblock Making convolutional networks shift-invariant again.
\newblock In \emph{ICML}, 2019.

\bibitem[Zhang et~al.(2019)Zhang, Liu, Long, and Jordan]{zhang2019bridging}
Yuchen Zhang, Tianle Liu, Mingsheng Long, and Michael Jordan.
\newblock Bridging theory and algorithm for domain adaptation.
\newblock In \emph{International Conference on Machine Learning}. PMLR, 2019.

\bibitem[Zhou et~al.(2018)Zhou, Veitch, Austern, Adams, and
  Orbanz]{zhou2018non}
Wenda Zhou, Victor Veitch, Morgane Austern, Ryan~P Adams, and Peter Orbanz.
\newblock Non-vacuous generalization bounds at the imagenet scale: a
  pac-bayesian compression approach.
\newblock \emph{arXiv preprint arXiv:1804.05862}, 2018.

\end{thebibliography}
\bibliographystyle{plainnat}


\newpage
\appendix
\section*{Appendix}
\crefalias{section}{appendix}
\renewcommand\thefigure{\thesection.\arabic{figure}}
\setcounter{figure}{0}
\renewcommand\thetable{\thesection.\arabic{table}}
\setcounter{table}{1}

\section{Experimental Details}
\label{app:exp_details}

\subsection{Description of Baselines}

\emph{Average Thresholded Confidence (ATC).~~} 
ATC first estimates a threshold $t$ 
on the confidence of softmax prediction (or on negative entropy)
such that the number of source labeled points that get a confidence greater than $t$ 
match the fraction of correct examples, and then estimates the test error on 
on the target domain $\test$ as the expected number of target points
that obtain a score less than $t$, i.e., 
\begin{align*}
    \text{ATC}_{\test}(s) =  \sum_{i=1}^n\indict{ s(f(x_i^\prime)) < t}  \,,
\end{align*}
where $t$ satisfies: $\sum_{i=1}^j{\indict{ \max_{j\in\calY} (f_j(x_i)) < t}}  = \sum_{i=1}^m{\indict{ \argmax_{j\in \calY} f_j(x_i) \ne y_i}}$


\emph{Average Confidence (AC).~~} Error is estimated as the average value of the maximum softmax confidence  on the target data, i.e, $\text{AC}_{\test} = \sum_{i=1}^n{ \max_{j\in\calY} f_j(x^\prime_i)}$. 

\emph{Difference Of Confidence (DOC). {}{}} We estimate error on the target by subtracting the difference of confidences on source and target (as a surrogate to distributional distance \citep{guillory2021predicting}) from the error on source distribution, i.e., $\text{DOC}_{\test} = \sum_{i=1}^n{ \max_{j\in\calY} f_j(x^\prime_i)} + \sum_{i=1}^m{\indict{ \argmax_{j\in \calY} f_j(x_i) \ne y_i}} - \sum_{i=1}^m{ \max_{j\in\calY} f_j(x_i)}$.
This is referred to as DOC-Feat in \citep{guillory2021predicting}.

\emph{Confidence Optimal Transport (COT).~~} COT uses the empirical estimator of
the Earth Mover’s Distance between labels from the source domain and softmax outputs of samples from the target domain to provide accuracy estimates: 

\begin{align*}
    \text{COT}_{\test}(s) = \frac12 \min_{\pi \in \Pi(S^n, Y^m)} \sum_{i,j=1}^{n,m} \norm{s_i - e_{y_j}}{2}\pi_{ij}\,, 
\end{align*}
where $S^n = \{ f(x^\prime_i)\}_{i=1}^n$ are the softmax outputs on the unlabeled target data and $Y^m = \{ y_j\}_{j=1}^m$ are the labels on holdout source examples.

For all of the methods described above, we assume that $\{(x^\prime_i)\}_{i=1}^n$ are the unlabeled target samples and $\{(x_i, y_i)\}_{i=1}^m$ are hold-out labeled source samples. 

\subsection{Dataset Details}
In this section, we provide additional details about the datasets used in our benchmark study. 

\begin{itemize}[leftmargin=*]
    \item \textbf{CIFAR10 {} {}} We use the original CIFAR10 dataset~\citep{krizhevsky2009learning} as the source dataset. For target domains, we consider (i) synthetic shifts (CIFAR10-C) due to common corruptions~\citep{hendrycks2019benchmarking}; and (ii) natural distribution shift, i.e., CIFAR10v2~\citep{recht2018cifar,torralba2008tinyimages} due to differences in data collection strategy. 
    We randomly sample 3 set of CIFAR-10-C datasets. Overall, we obtain 5 datasets (i.e., CIFAR10v1, CIFAR10v2, CIFAR10C-Frost (severity 4), CIFAR10C-Pixelate (severity 5), CIFAR10-C Saturate (severity 5)).

    \item \textbf{CIFAR100 {} {}} Similar to CIFAR10, we use the original CIFAR100 set as the source dataset. For target domains we consider synthetic shifts (CIFAR100-C) due to common corruptions. We sample 4 CIFAR100-C datasets, overall obtaining 5 domains (i.e., CIFAR100, CIFAR100C-Fog (severity 4), CIFAR100C-Motion Blur (severity 2), CIFAR100C-Contrast (severity 4), CIFAR100C-spatter (severity 2) ).

    \item \textbf{FMoW {} {}} In order to consider distribution shifts faced in the wild, we consider FMoW-WILDs~\citep{wilds2021,christie2018functional} from \textsc{Wilds} benchmark, which contains satellite images taken  in different geographical regions and at different times. We use the original train as source and OOD val and OOD test splits as target domains as they are collected over different time-period. Overall, we obtain 3 different domains.

    \item \textbf{Camelyon17 {} {}} Similar to FMoW, we consider tumor identification dataset from the wilds benchmark~\citep{bandi2018detection}. We use the default train as source and OOD val and OOD test splits as target domains as they are collected across different hospitals. Overall, we obtain 3 different domains.

    \item \textbf{BREEDs {} {}} We also consider {BREEDs} benchmark~\citep{santurkar2020breeds}  in our setup to assess robustness to subpopulation shifts. {BREEDs} leverage class hierarchy in ImageNet to re-purpose original classes to be the subpopulations and defines a classification task on superclasses. 
    We consider distribution shift due to subpopulation shift which is induced by directly making the subpopulations present in the training and test distributions disjoint. 
    {BREEDs}  benchmark contains 4 datasets \textbf{Entity-13}, \textbf{Entity-30},
    \textbf{Living-17}, and \textbf{Non-living-26}, each focusing on 
    different subtrees and levels in the hierarchy. We also consider natural shifts due to differences in the data  collection process of ImageNet~\citep{russakovsky2015imagenet},  e.g, ImageNetv2~\citep{recht2019imagenet} and a combination of both. 
    Overall, for each of the 4 BREEDs datasets (i.e., {Entity-13}, {Entity-30},
    {Living-17}, and {Non-living-26}), we obtain four different domains. We refer to them as follows: BREEDsv1 sub-population 1 (sampled from ImageNetv1), BREEDsv1 sub-population 2 (sampled from ImageNetv1), BREEDsv2 sub-population 1 (sampled from ImageNetv2), BREEDsv2 sub-population 2 (sampled from ImageNetv2). 
    For each BREEDs dataset, we use BREEDsv1 sub-population A as source and the other three as target domains. 
    
    
    
    
    \item \textbf{OfficeHome {} {}} We use four domains (art, clipart, product and real) from OfficeHome dataset~\citep{venkateswara2017deep}. We use the product domain as source and the other domains as target.
    
    \item \textbf{DomainNet {} {}} We use four domains (clipart, painting, real, sketch) from the Domainnet dataset~\citep{peng2019moment}. We use real domain as the source and the other domains as target.
    
    \item \textbf{Visda {} {}} We use three domains (train, val and test) from the Visda dataset~\citep{peng2018syn2real}. While `train' domain contains synthetic renditions of the objects, `val' and `test' domains contain real world images. To avoid confusing, the domain names with their roles as splits, we rename them as `synthetic', `Real-1' and `Real-2'. We use the synthetic (original train set) as the source domain and use the other domains as target. 
\end{itemize}

\begin{table}[ht]
    \begin{adjustbox}{width=0.9\columnwidth,center}
    \centering
    \small
    \tabcolsep=0.12cm
    \renewcommand{\arraystretch}{1.5}
    \begin{tabular}{lcccc}
    \toprule    
    Dataset && Source & & Target  \\
    \midrule
    CIFAR10 &&  CIFAR10v1  && \thead{CIFAR10v1, CIFAR10v2, CIFAR10C-Frost (severity 4),\\ CIFAR10C-Pixelate (severity 5), CIFAR10-C Saturate (severity 5)} \\ 
    CIFAR100 && CIFAR100 && \thead{CIFAR100, CIFAR100C-Fog (severity 4), \\ CIFAR100C-Motion Blur (severity 2), CIFAR100C-Contrast (severity 4), \\ CIFAR100C-spatter (severity 2)} \\
    Camelyon && \thead{Camelyon \\ (Hospital 1--3)} && Camelyon (Hospital 1--3), Camelyon (Hospital 4), Camelyon (Hospital 5)\\ 
    FMoW && FMoW (2002--'13) && FMoW (2002--'13), FMoW (2013--'16), FMoW (2016--'18)\\
    Entity13 && \thead{Entity13 \\(ImageNetv1 \\ sub-population 1)} && \thead{Entity13 (ImageNetv1 sub-population 1), \\ Entity13 (ImageNetv1 sub-population 2), \\  Entity13 (ImageNetv2 sub-population 1),\\ Entity13 (ImageNetv2 sub-population 2) } \\
    Entity30 && \thead{Entity30 \\(ImageNetv1 \\ sub-population 1)} && \thead{Entity30 (ImageNetv1 sub-population 1), \\ Entity30 (ImageNetv1 sub-population 2), \\  Entity30 (ImageNetv2 sub-population 1), \\ Entity30 (ImageNetv2 sub-population 2) } \\
    Living17 && \thead{Living17 \\(ImageNetv1 \\ sub-population 1)} && \thead{Living17 (ImageNetv1 sub-population 1), \\ Living17 (ImageNetv1 sub-population 2), \\  Living17 (ImageNetv2 sub-population 1), \\ Living17 (ImageNetv2 sub-population 2) } \\ 
    Nonliving26 && \thead{Nonliving26 \\(ImageNetv1 \\ sub-population 1)} && \thead{Nonliving26 (ImageNetv1 sub-population 1),\\ Nonliving26 (ImageNetv1 sub-population 2), \\  Nonliving26 (ImageNetv2 sub-population 1), \\ Nonliving26 (ImageNetv2 sub-population 2) }  \\ 
    Officehome && Product && Product, Art, ClipArt, Real\\
    DomainNet && Real && Real, Painiting, Sketch, ClipArt  \\
    Visda && \thead{Synthetic\\(originally referred \\to as train)} && \thead{Synthetic, Real-1 (originally referred to as val), \\ Real-2  (originally referred to as test)} \\
    \bottomrule 
    \end{tabular}  
    \end{adjustbox}  
    \vspace{5pt}
    \caption{Details of the source and target datasets in our testbed.}\label{table:dataset}
 \end{table}
 
\subsection{Setup and Protocols}


\paragraph{Architecture Details}

For all datasets, we used the same architecture across different algorithms: 

\begin{itemize}
    \item CIFAR-10: Resnet-18~\citep{he2016deep} pretrained on Imagenet
    \item CIFAR-100: Resnet-18~\citep{he2016deep} pretrained on Imagenet
    \item Camelyon: Densenet-121~\citep{huang2017densely} \emph{not} pretrained on Imagenet as per the suggestion made in \citep{wilds2021}
    \item FMoW: Densenet-121~\citep{huang2017densely} pretrained on Imagenet
    \item BREEDs (Entity13, Entity30, Living17, Nonliving26): Resnet-18~\citep{he2016deep} \emph{not} pretrained on Imagenet as per the suggestion in \citep{santurkar2020breeds}. The main rationale is to avoid pre-training on the superset dataset where we are simulating sub-population shift. 
    \item Officehome: Resnet-50~\citep{he2016deep} pretrained on Imagenet
    \item Domainnet:  Resnet-50~\citep{he2016deep} pretrained on Imagenet
    \item Visda:  Resnet-50~\citep{he2016deep} pretrained on Imagenet
\end{itemize}

Except for Resnets on CIFAR datasets, we used the standard pytorch implementation~\citep{gardner2018gpytorch}. For Resnet on cifar, we refer to the implementation here: \url{https://github.com/kuangliu/pytorch-cifar}. For all the architectures, whenever applicable, we add antialiasing~\citep{zhang2019shiftinvar}. 
We use the official library released with the paper. 

For imagenet-pretrained models with standard architectures, 
we use the publicly available models here: \url{https://pytorch.org/vision/stable/models.html}. 
For imagenet-pretrained models on the reduced input size images (e.g. CIFAR-10), we 
train a model on Imagenet on reduced input size from scratch. We include the model with our 
publicly available repository.

\paragraph{Hyperparameter details }
First, we tune learning rate and $\ell_2$ regularization parameter 
by fixing batch size for each dataset that correspond to maximum we can fit to 15GB GPU memory. 
We set the number of epochs for training as per the 
suggestions of the authors of respective benchmarks. 
Note that we define the number of epochs as a 
full pass over the labeled training source data.  
We summarize learning rate, batch size, number of epochs, and 
$\ell_2$ regularization parameter used 
in our study in Table~\ref{table:hyperparameter_dataset}.

\begin{table}[h!]
    \begin{adjustbox}{width=\columnwidth,center}
    \centering
    \small
    \tabcolsep=0.12cm
    \renewcommand{\arraystretch}{1.5}
    \begin{tabular}{lccccccccc}
    \toprule    
    Dataset && Epoch && Batch size && $\ell_2$ regularization && Learning rate \\
    \midrule
    CIFAR10 && 50 && 200 && 0.0001 (chosen from $\{0.0001, 0.001, $1e-5$, 0.0\}$) && 0.01  (chosen from $\{0.001, 0.01, 0.0001\}$) \\ 
    CIFAR100 && 50 && 200 && 0.0001 (chosen from $\{0.0001, 0.001, $1e-5$, 0.0\}$) && 0.01  (chosen from $\{0.001, 0.01, 0.0001\}$) \\ 
    Camelyon && 10 && 96 && 0.01 (chosen from $\{0.01, 0.001, 0.0001, 0.0\}$) && 0.03 (chosen from $\{0.003, 0.3, 0.0003, 0.03\}$) \\ 
    FMoW && 30 && 64 && 0.0 (chosen from $\{0.0001, 0.001, $1e-5$, $0.0$\}$) && 0.0001 (chosen from $\{0.001, 0.01, 0.0001\}$)  \\
    Entity13 && 40 && 256 && 5e-5 (chosen from $\{$5e-5, 5e-4, 1e-4, 1e-5$\}$) && 0.2 (chosen from $\{0.1, 0.5, 0.2, 0.01, 0.0\}$)\\
    Entity30 && 40 && 256 &&  5e-5 (chosen from $\{$5e-5, 5e-4, 1e-4, 1e-5$\}$) && 0.2 (chosen from $\{0.1, 0.5, 0.2, 0.01, 0.0\}$) \\
    Living17 && 40 && 256 &&  5e-5 (chosen from $\{$5e-5, 5e-4, 1e-4, 1e-5$\}$) && 0.2 (chosen from $\{0.1, 0.5, 0.2, 0.01, 0.0\}$)  \\ 
    Nonliving26 && 40 && 256 && 0  5e-5 (chosen from $\{$5e-5, 5e-4, 1e-4, 1e-5$\}$) && 0.2 (chosen from $\{0.1, 0.5, 0.2, 0.01, 0.0\}$)  \\ 
    Officehome && 50 && 96 && 0.0001 (chosen from $\{0.0001, 0.001, $1e-5$, 0.0\}$) &&0.01 (chosen from $\{0.001, 0.01, 0.0001\}$)   \\
    DomainNet && 15 &&  96 && 0.0001 (chosen from $\{0.0001, 0.001, $1e-5$, 0.0\}$)  && 0.01 (chosen from $\{0.001, 0.01, 0.0001\}$)  \\
    Visda &&  10 && 96  && 0.0001 (chosen from $\{0.0001, 0.001, $1e-5$, 0.0\}$) && 0.01 (chosen from $\{0.001, 0.01, 0.0001\}$) \\
    \bottomrule 
    \end{tabular}  
    \end{adjustbox}  
    \vspace{5pt}
    \caption{Details of the learning rate and batch size considered in our testbed} \label{table:hyperparameter_dataset}
 \end{table}

For each algorithm, we use the hyperparameters reported in the initial papers.
For domain-adversarial methods (DANN and CDANN), we refer to the suggestions made in 
Transfer Learning Library~\citep{jiang2022transferability}.
We tabulate hyperparameters  for each algorithm next: 

\begin{itemize}
    \item \textbf{DANN, CDANN, ~~} As per Transfer Learning Library suggestion, we use a learning rate multiplier of $0.1$ for the featurizer when initializing with a pre-trained network and $1.0$ otherwise. We default to a penalty weight of $1.0$ for all datasets with pre-trained initialization. 
    \item \textbf{FixMatch~~} We use the lambda is 1.0 and use threshold $\tau$ as 0.9.  
\end{itemize}

\paragraph{Compute Infrastructure}

Our experiments were performed across a combination of Nvidia T4, A6000, and V100 GPUs. 

\section{Comparing Disagreement Losses}
\label{app:disloss}

We define the alternate losses for maximizing disagreement:

\begin{enumerate}
    \item \citet{chuang2020estimating} minimize the negative cross-entropy loss, which is concave in the model logits. That is, they add the term $\log \textrm{softmax}(h(x)_y)$ to the objective they are minimizing. This loss results in substantially lower disagreement discrepancy than the other two.
    \item \citet{pagliardini2023dbat} use a loss which is not too different from ours. They define the disagreement objective for a point $(x, y)$ as
    \begin{align}
        \label{eq:dbat-loss}
        \log \left( 1+ \frac{\exp(h(x)_y)}{\sum_{\hat y\neq y} \exp(h(x)_{\hat y})} \right).
    \end{align}
\end{enumerate}
For comparison, $\disloss$ can be rewritten as
\begin{align}
    \label{eq:dis-loss}
    \log \left( 1+ \frac{\exp(h(x)_y)}{\exp\left(\frac{1}{|\gY|-1} \sum_{\hat y\neq y} h(x)_{\hat y} \right)} \right),
\end{align}
where the incorrect logits are averaged and the exponential is pushed outside the sum. This modification results in (\ref{eq:dis-loss}) being convex in the logits and an upper bound to the disagreement 0-1 loss, whereas (\ref{eq:dbat-loss}) is neither.

\begin{figure}[h!]
    \centering
    \includegraphics[width = .8\linewidth]{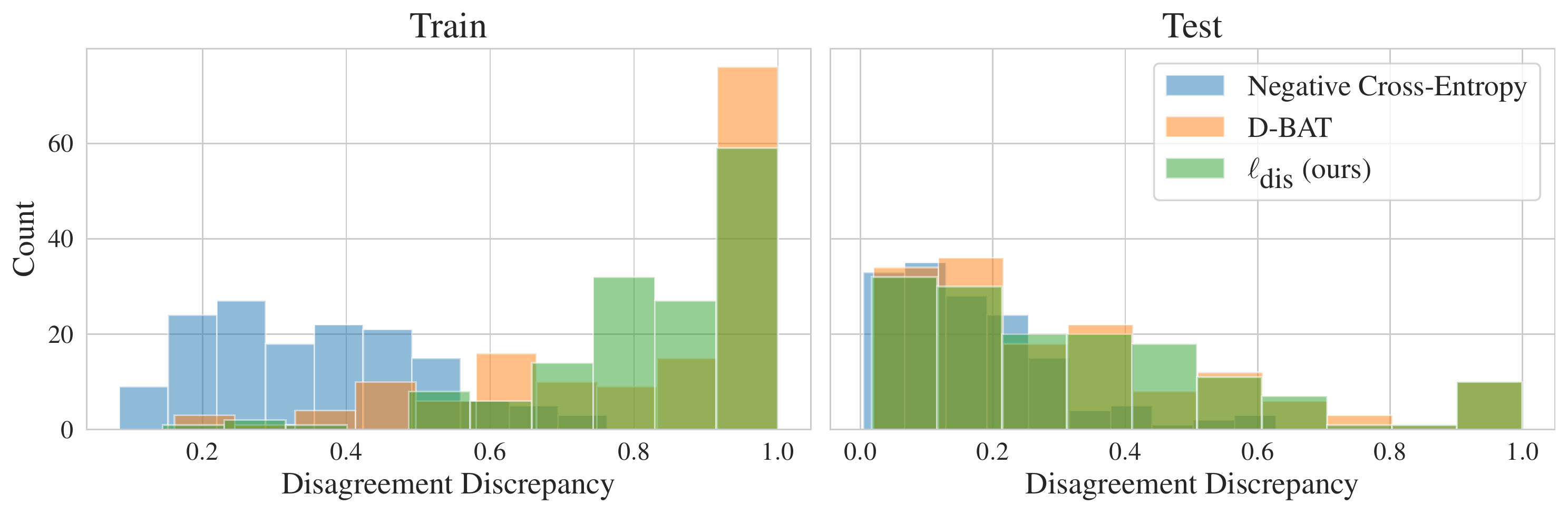}
    \quad
    \begin{tabular}{lcc}
        \toprule
        Loss & Mean Discrepancy (Train) & Mean Discrepancy (Test) \\
        \midrule
        Neg. X-Ent \citep{chuang2020estimating} & $0.3555 \pm .0124$ & $0.1694 \pm .0105$ \\
        D-BAT \citep{pagliardini2023dbat} & $0.8145 \pm .0177$ & $0.3224 \pm .0212$ \\
        $\disloss$ (Ours) & $\underline{0.8333 \pm .0132}$ & $\mathbf{0.3322 \pm .0205}$ \\
        \bottomrule
    \end{tabular}
    \captionsetup{labelformat=andtable}
    \caption{Histogram of disagreement discrepancies for each of the three losses, and the average values across all datasets. \textbf{Bold} (resp. \underline{Underline}) indicates the method has higher average discrepancy under a paired t-test at significance $p=.01$ (resp. $p=.05$).}
    \label{fig:compare_disloss}
\end{figure}

\cref{fig:compare_disloss} displays histograms of the achieved disagreement discrepancy across all distributions for each of the disagreement losses (all hyperparameters and random seeds are the same for all three losses). The table below it reports the mean disagreement discrepancy on the train and test sets. We find that the negative cross-entropy, being a concave function, results in very low discrepancy. The loss (\ref{eq:dbat-loss}) is reasonably competitive with our loss (\ref{eq:dis-loss}) on average, seemingly because it gets very high discrepancy on a subset of shifts. This suggests that it may be particularly suited for a specific type of distribution shift, though it is less good overall. Though the averages are reasonably close, the samples are not independent, so we run a paired t-test and we find that the increases to average train and test discrepancies achieved by $\disloss$ are significant at levels $p=0.024$ and $p=0.009$, respectively. However, with enough holdout data, a reasonable approach would be to split the data in two: one subset to validate critics trained on either of the two losses, and another to evaluate the discrepancy of whichever one is ultimately selected.

\section{Exploration of the Validity Score}
\label{app:validity_score}

To experiment with reducing the complexity of the class $\gH$, we evaluate \disdis on progressively fewer top principal components (PCs) of the features. Precisely, for features of dimension $d$, we evaluate \disdis on the same features projected onto their top $\nicefrac{d}{k}$ components, for $k\in[1,4,16,32,64,128]$ (\cref{fig:reduce_PCs}). We see that while projecting to fewer and fewer PCs does reduce the error bound value, unlike the logits it is a rather crude way to reduce complexity of $\gH$, meaning at some point it goes too far and results in invalid error bounds.

\begin{figure}[h!]
    \centering
    \includegraphics[width = .6\linewidth]{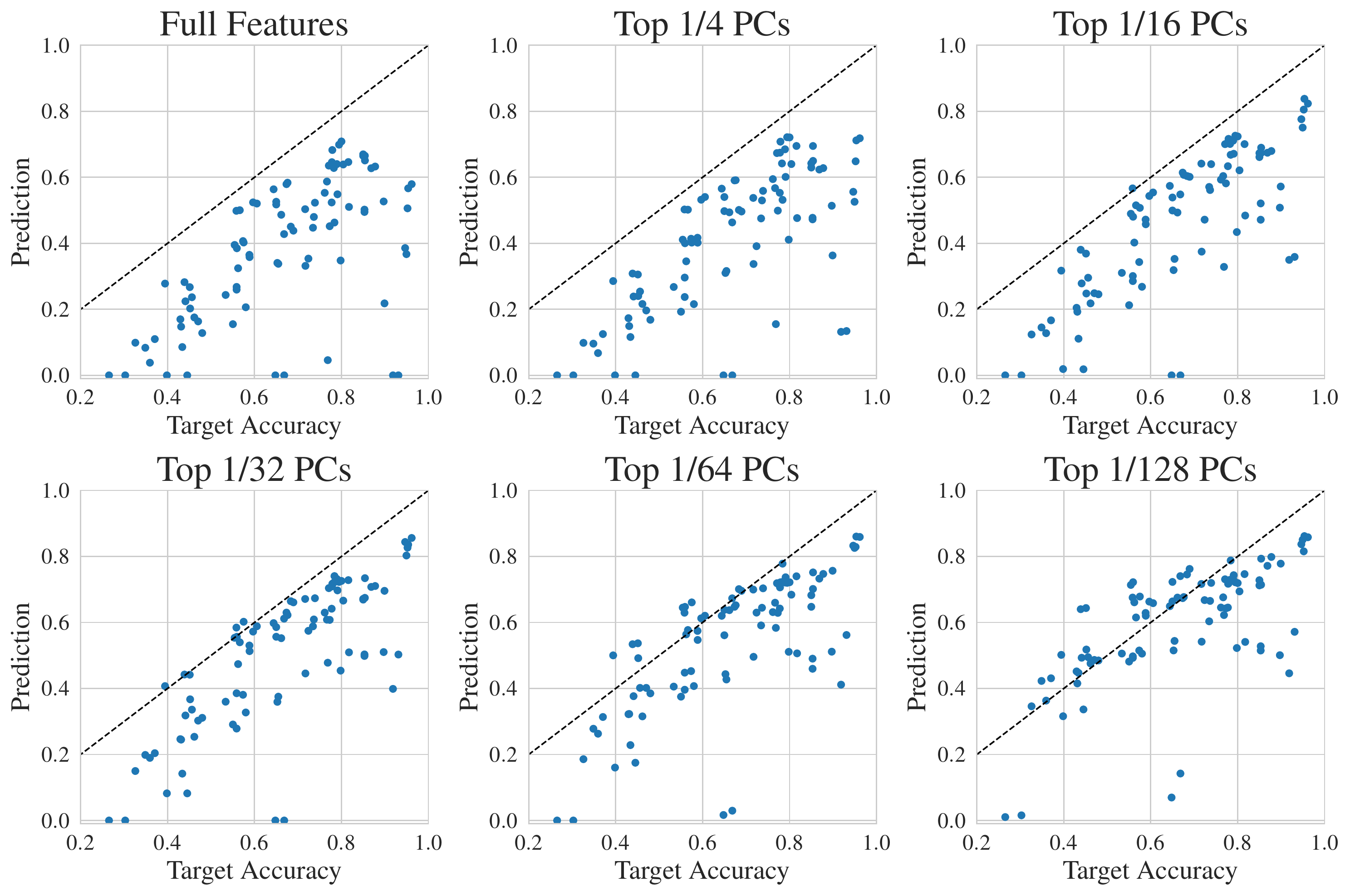}
    \caption{\textbf{\disdis bound as fewer principal components are kept.} Reducing the number of top principal components crudely reduces complexity of $\gH$---this leads to lower error estimates, but at some point the bounds become invalid for a large fraction of shifts.}
    \label{fig:reduce_PCs}
\end{figure}

However, during the optimization process we observe that around when this violation occurs, the task of training a critic to both agree on $\src$ and disagree on $\trg$ goes from ``easy'' to ``hard''. \cref{fig:opt_trajectory} shows that on the full features, the critic rapidly ascends to maximum agreement on $\src$, followed by slow decay (due to both overfitting and learning to simultaneously disagree on $\trg$). As we drop more and more components, this optimization becomes slower.

\begin{figure}[h!]
    \centering
    \includegraphics[width = .6\linewidth]{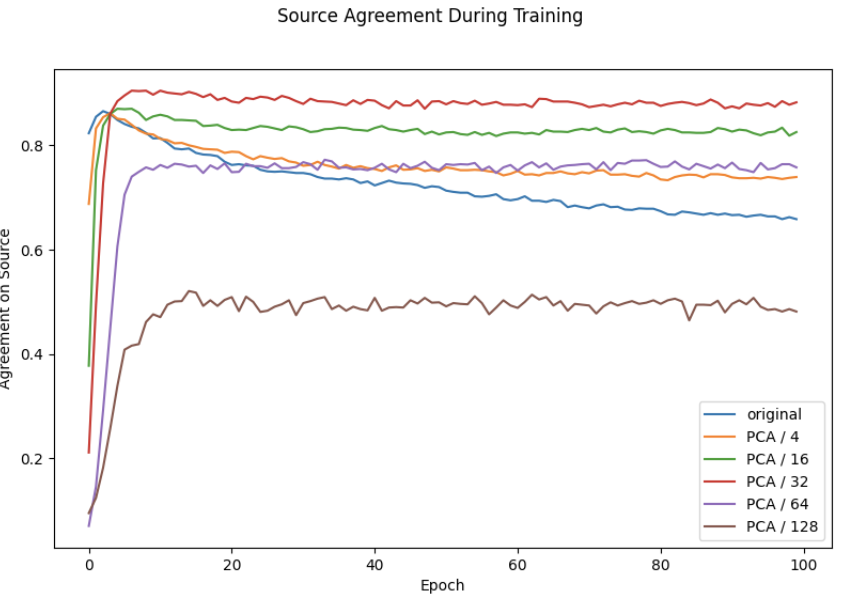}
    \caption{\textbf{Agreement on one shift between $\hat h$ and $h'$ on $\hat\src$ during optimization.} We observe that as the number of top PCs retained drops, the optimization occurs more slowly and less monotonically. For this particular shift, the bound becomes invalid when keeping only the top $\nicefrac{1}{128}$ components, depicted by the brown line.}
    \label{fig:opt_trajectory}
\end{figure}

We therefore design a ``validity score'' intended to capture this phenomenon which we refer to as the \emph{cumulative $\ell_1$ ratio}. This is defined as the maximum agreement achieved, divided by the cumulative sum of absolute differences in agreement across all epochs up until the maximum was achieved. Formally, let $\{a_i\}_{i=1}^T$ represent the agreement between $h'$ and $\hat h$ after epoch $i$, i.e. $1 - \epsilon_{\hat\src}(\hat h, h'_i)$, and define $m := \argmax_{i\in[T]} a_i$. The cumulative $\ell_1$ ratio is then $\frac{a_m}{a_1 + \sum_{i=2}^m |a_i - a_{i-1}|}$. Thus, if the agreement rapidly ascends to its maximum without ever going down over the course of an epoch, this ratio will be equal to 1, and if it non-monotonically ascends then the ratio will be significantly less. This definition was simply the first metric we considered which approximately captures the behavior we observed; we expect it could be greatly improved.

\begin{figure}[h!]
    \centering
    \includegraphics[width = .7\linewidth]{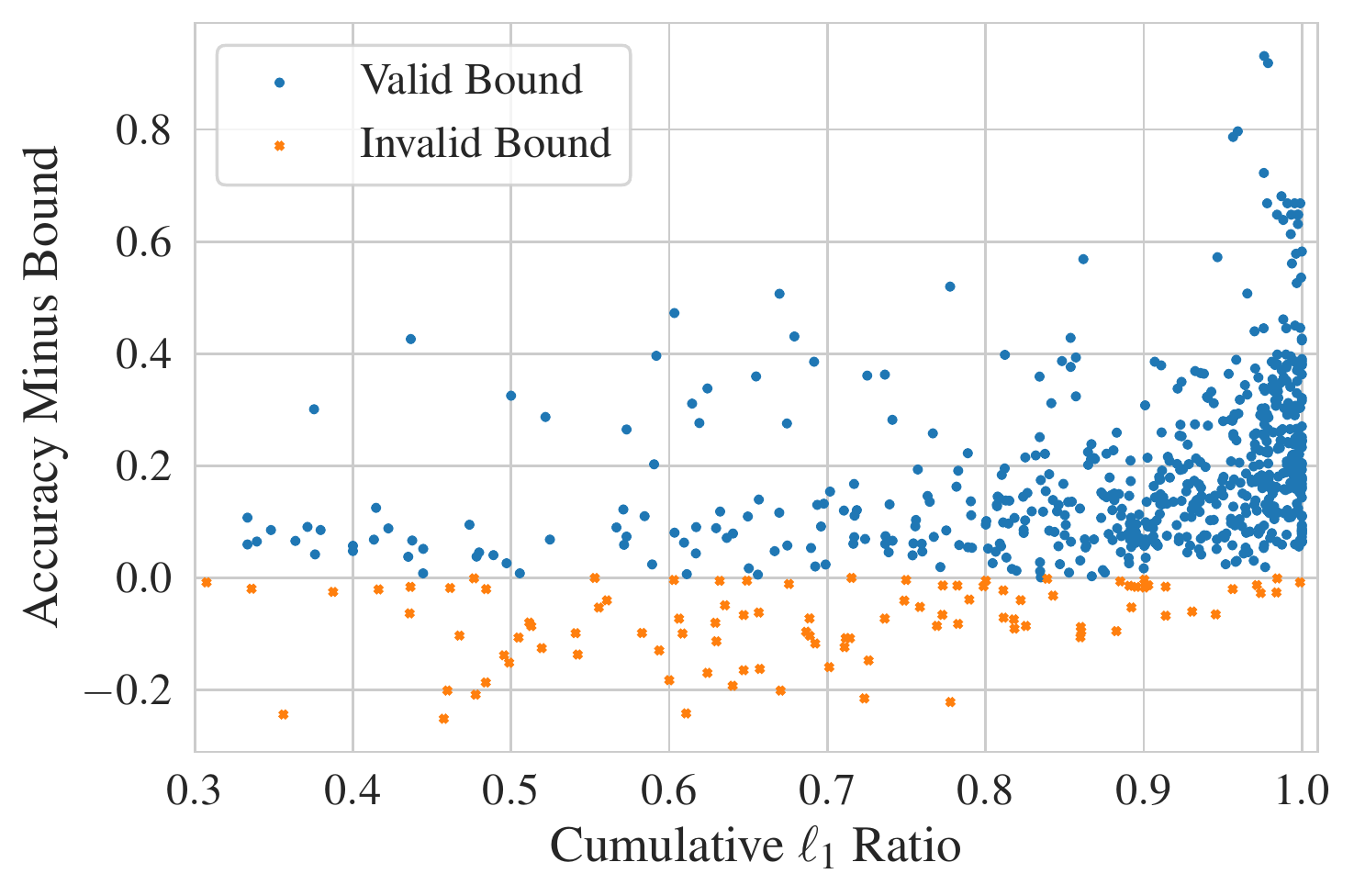}
    \caption{\textbf{Cumulative $\ell_1$ ratio versus error prediction gap.} Despite its simplicity, the ratio captures the information encoded in the optimization trajectory, roughly linearly correlating with the tightness and validity of a given prediction. It is thus a useful metric for identifying the ideal number of top PCs to use.}
    \label{fig:validity_score}
\end{figure}

\cref{fig:validity_score} displays a scatter plot of the cumulative $\ell_1$ ratio versus the difference in estimated and true error for \disdis evaluated on the full range of top PCs. A negative value implies that we have underestimated the error (i.e., the bound is not valid). We see that even this very simply metric roughly linearly correlates with the tightness of the bound, which suggests that evaluating over a range of top PC counts and only keeping predictions whose $\ell_1$ ratio is above a certain threshold can improve raw predictive accuracy without reducing coverage by too much. \cref{fig:min_ratio_threshold} shows that this is indeed the case: compared to \disdis evaluated on the logits, keeping all predictions above a score threshold can produce more accurate error estimates, without \emph{too} severely underestimating error in the worst case. 

\begin{figure}[h!]
    \centering
    \includegraphics[width = \linewidth]{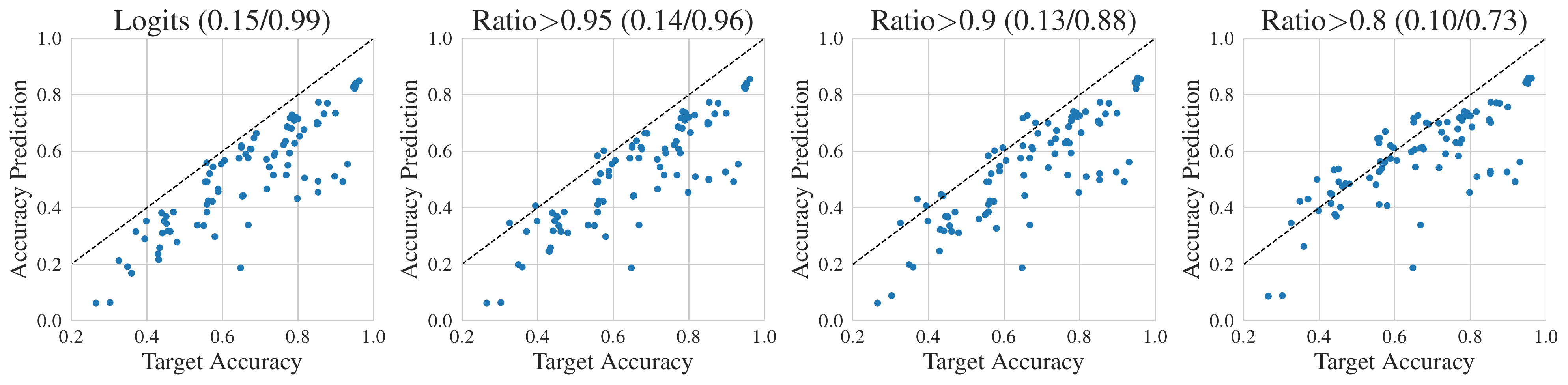}
    \caption{\textbf{\disdis bounds and MAE / coverage as the cumulative $\ell_1$ ratio threshold is lowered.} Values in parenthesis are (MAE / coverage). By only keeping predictions with ratio above a varying threshold, we can smoothly interpolate between bound validity and raw error prediction accuracy.}
    \label{fig:min_ratio_threshold}
\end{figure}

\section{Making Baselines More Conservative with LOOCV}
\label{app:loocv}

To more thoroughly compare \disdis to prior estimation techniques, we consider a strengthening of the baselines which may give them higher coverage without too much cost to prediction accuracy. Specifically, for each desired coverage level $\alpha \in [0.9, 0.95, 0.99]$, we use all but one of the datasets to learn a parameter to either scale or shift a method's predictions enough to achieve coverage $\alpha$. We then evaluate this scaled or shifted prediction on the distribution shifts of the remaining dataset, and we repeat this for each one. 

The results, found in \cref{table:shift-scale}, demonstrate that prior methods can indeed be made to have much higher coverage, although as expected their MAE suffers. Furthermore, they still underestimate error on the tail distribution shifts by quite a bit, and they rarely achieve the desired coverage on the heldout dataset---though they usually come reasonably close. In particular, ATC \citep{garg2022leveraging} and COT \citep{lu2023predicting} do well with a shift parameter, e.g. at the desired coverage $\alpha=0.95$ ATC matches \disdis in MAE and gets 94.4\% coverage (compared to 98.9\% by \disdis). However, its conditional average overestimation is quite high, almost 9\%. COT gets much lower overestimation (particularly for higher coverage levels), and it also appears to suffer less on the tail distribution shifts in the sense that $\alpha=0.99$ does not induce nearly as high MAE as it does for ATC. However, at that level it only achieves 95.6\% coverage, and it averages almost 5\% accuracy overestimation on the shifts it does not correctly bound (compared to 0.1\% by \disdis). Also, its MAE is still substantially higher than \disdis, despite getting lower coverage. Finally, we evaluate the scale/shift approach on our \disdis bound without the lower order term, but based on the metrics we report there appears to be little reason to prefer it over the untransformed version, one of the baselines, or the original \disdis bound.

Taken together, these results imply that if one's goal is predictive accuracy and tail behavior is not important (worst \textasciitilde 10\%), ATC or COT will likely get reasonable coverage with a shift parameter---though they still significantly underestimate error on a non-negligible fraction of shifts. If one cares about the long tail of distribution shifts, or prioritizes being conservative at a slight cost to average accuracy, \disdis is clearly preferable. Finally, we observe that the randomness which determines which shifts are not correctly bounded by \disdis is ``decoupled'' from the distributions themselves under \cref{thm:bound}, in the sense that it is an artifact of the random samples, rather than a property of the distribution (recall \cref{fig:dis2_coverage_violation}). This is in contrast with the shift/scale approach which would produce almost identical results under larger sample sizes because it does not account for finite sample effects. This implies that some distribution shifts are simply ``unsuitable'' for prior methods because they do not satisfy whatever condition these methods rely on, and observing more samples will not remedy this problem. It is clear that working to understand these conditions is crucial for reliability and interpretability, since we are not currently able to identify which distributions are suitable a priori.

\begin{table}[h!]
    \begin{adjustbox}{width=\columnwidth,center}
    \centering
    \tabcolsep=0.12cm
    \renewcommand{\arraystretch}{1.2}
    \begin{tabular}{llcccccccccccc}
    \toprule
    \multicolumn{2}{l}{} && \multicolumn{3}{c}{MAE $(\downarrow)$} && \multicolumn{3}{c}{Coverage $(\uparrow)$} && \multicolumn{3}{c}{Overest. $(\downarrow)$} \\
    & \multicolumn{1}{r}{$\alpha\rightarrow$}
    && 0.9 & 0.95 & 0.99
    && 0.9 & 0.95 & 0.99
    && 0.9 & 0.95 & 0.99 \\
    Method & Adjustment \\
    \midrule
    AC & none && \multicolumn{3}{c}{0.106} && \multicolumn{3}{c}{0.122} && \multicolumn{3}{c}{0.118} \\
    & shift
    && 0.153 & 0.201 & 0.465
    && 0.878 & 0.922 & 0.956
    && 0.119 & 0.138 & 0.149 \\
    & scale
    && 0.195 & 0.221 & 0.416
    && 0.911 & 0.922 & 0.967
    && 0.135 & 0.097 & 0.145 \\
    \midrule
    DoC & none && \multicolumn{3}{c}{0.105} && \multicolumn{3}{c}{0.167} && \multicolumn{3}{c}{0.122} \\
    & shift
    && 0.158 & 0.200 & 0.467
    && 0.878 & 0.911 & 0.956
    && 0.116 & 0.125 & 0.154 \\
    & scale
    && 0.195 & 0.223 & 0.417
    && 0.900 & 0.944 & 0.967
    && 0.123 & 0.139 & 0.139 \\
    \midrule
    ATC NE & none && \multicolumn{3}{c}{0.067} && \multicolumn{3}{c}{0.289} && \multicolumn{3}{c}{0.083} \\
    & shift
    && 0.117 & 0.150 & 0.309
    && 0.900 & 0.944 & 0.978
    && 0.072 & 0.088 & 0.127 \\
    & scale
    && 0.128 & 0.153 & 0.357
    && 0.889 & 0.933 & 0.978
    && 0.062 & 0.074 & 0.144 \\
    \midrule
    COT & none && \multicolumn{3}{c}{0.069} && \multicolumn{3}{c}{0.256} && \multicolumn{3}{c}{0.085} \\
    & shift
    && 0.115 & 0.140 & 0.232
    && 0.878 & 0.944 & 0.956
    && 0.049 & 0.065 & 0.048 \\
    & scale
    && 0.150 & 0.193 & 0.248
    && 0.889 & 0.944 & 0.956
    && 0.074 & 0.066 & 0.044 \\
    \midrule
    \disdis (w/o $\delta$) & none && \multicolumn{3}{c}{0.083} && \multicolumn{3}{c}{0.756} && \multicolumn{3}{c}{0.072} \\
    & shift
    && 0.159 & 0.169 & 0.197
    && 0.889 & 0.933 & 0.989
    && 0.021 & 0.010 & 0.017
    \\
    & scale
    && 0.149 & 0.168 & 0.197
    && 0.889 & 0.933 & 0.989
    && 0.023 & 0.021 & 0.004
    \\
    \midrule
    \disdis ($\delta=10^{-2}$) & none && \multicolumn{3}{c}{0.150} && \multicolumn{3}{c}{0.989} && \multicolumn{3}{c}{0.001} \\
    \disdis ($\delta=10^{-3}$) & none && \multicolumn{3}{c}{0.174} && \multicolumn{3}{c}{1.000} && \multicolumn{3}{c}{0.000} \\
    \bottomrule
    \end{tabular}
    \end{adjustbox}
    \vspace{5pt}
    \caption{MAE, coverage, and conditional average overestimation for the strengthened baselines with a shift or scale parameter on non-domain-adversarial representations. Because a desired coverage $\alpha$ is only used when an adjustment is learned, ``none''---representing no adjustment---does not vary with $\alpha$.}
    \label{table:shift-scale}
\end{table}

\section{Proving that \cref{strengthened-assumption} Holds}
\label{app:prove-assumption}

Here we describe how the equivalence of \cref{strengthened-assumption} and the bound in \cref{thm:bound} allow us to prove that the assumption holds with high probability. By repeating essentially the same proof as \cref{thm:bound} in the other direction, we get the following corollary:
\begin{corollary}
    If \cref{strengthened-assumption} does \emph{not} hold, then with probability $\geq 1-\delta$,
    \begin{align*}
        \epsilon_{\hat\trg}(\hat h) &> \epsilon_{\hat\src}(\hat h) + \hat\Delta(\hat h, h') - \sqrt{\frac{2(n_S + n_T) \log \nicefrac{1}{\delta}}{n_S n_T}}.
    \end{align*}
\end{corollary}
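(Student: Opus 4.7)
The plan is to mirror the proof of \cref{thm:bound} but run the concentration argument in the reverse direction. Negating \cref{strengthened-assumption} gives $\Delta(\hat h, y^*) > \Delta(\hat h, h')$, and combining this with \cref{lemma:error-expansion} yields the strict \emph{population} inequality
\begin{align*}
\epsilon_{\trg}(\hat h) > \epsilon_{\src}(\hat h) + \Delta(\hat h, h') = \epsilon_{\src}(\hat h, y^*) + \epsilon_{\trg}(\hat h, h') - \epsilon_{\src}(\hat h, h').
\end{align*}
The entire task then reduces to a one-sided concentration argument: I need to show that the analogous empirical combination does not fall much below its expectation with probability $\geq 1 - \delta$.

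Mechanically, I would define per-sample variables analogous to those in the proof of \cref{thm:bound} but with the sign of each indicator flipped, so that their total sum equals $\epsilon_{\hat\trg}(\hat h) - \epsilon_{\hat\src}(\hat h) - \hat\Delta(\hat h, h')$ and their expectation equals the strictly positive quantity $\epsilon_{\trg}(\hat h) - \epsilon_{\src}(\hat h) - \Delta(\hat h, h')$. Concretely, for $(x_i, y_i) \in \hat\src$ I would set $r_{\src, i} = \tfrac{1}{n_S}(\mathbf{1}\{\hat h(x_i) \neq h'(x_i)\} - \mathbf{1}\{\hat h(x_i) \neq y_i\})$, and for $(x_i, y_i) \in \hat\trg$ I would set $r_{\trg, i} = \tfrac{1}{n_T}(\mathbf{1}\{\hat h(x_i) \neq y_i\} - \mathbf{1}\{\hat h(x_i) \neq h'(x_i)\})$. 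Each $r_{\src, i}$ is supported in $\{-1/n_S, 0, 1/n_S\}$ and each $r_{\trg, i}$ is supported in $\{-1/n_T, 0, 1/n_T\}$, so the sum of squared ranges entering Hoeffding is $n_S \cdot (2/n_S)^2 + n_T \cdot (2/n_T)^2 = 4(n_S + n_T)/(n_S n_T)$.

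Applying Hoeffding's inequality in the one-sided ``sample sum below its mean'' direction, the probability that the sample sum falls below its expectation by more than $t$ is at most $\exp(-t^2 n_S n_T / (2(n_S + n_T)))$. Setting this equal to $\delta$ and solving for $t$ gives exactly $t = \sqrt{2(n_S + n_T) \log(1/\delta) / (n_S n_T)}$. Because the expectation is strictly positive under the negation of \cref{strengthened-assumption}, the event ``sample sum exceeds $-t$'' implies the stated bound after rearrangement.

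The main ``obstacle'' is bookkeeping rather than technical: I must be careful that the sign conventions on the random variables are reversed relative to \cref{thm:bound} since we are now lower-bounding rather than upper-bounding, and that the asymmetric constants between the two statements ($n_S + 4 n_T$ in \cref{thm:bound} versus $2(n_S + n_T)$ here) are fully explained by the ranges: in the original proof $r_{\trg, i}$ has the smaller range $1/n_T$ because $y^*$ only appears on the source side, whereas here $y^*$ appears on both sides and so both per-sample ranges double.
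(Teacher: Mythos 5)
Your proof is correct and is exactly the argument the paper intends: the paper gives no separate proof for this corollary, stating only that it follows by ``repeating essentially the same proof as the main bound in the other direction,'' and your reversed-sign random variables, the range computation $4/n_S + 4/n_T$, and the one-sided Hoeffding step reproduce that reversal faithfully, yielding the stated constant. The only quibble is in your closing remark: only the target per-sample range doubles (from $1/n_T$ to $2/n_T$, since $y^*$ now also enters the target terms), while the source range remains $2/n_S$ in both proofs --- your actual computation already reflects this correctly.
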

Note that the last term here is different from \cref{thm:bound} because we are bounding the empirical target error, rather than the true target error. The reason for this change is that now we can make direct use of its contrapositive:
\begin{corollary}
    \label{corollary}
    If it is the case that
    \begin{align*}
        \epsilon_{\hat\trg}(\hat h) &\leq \epsilon_{\hat\src}(\hat h) + \hat\Delta(\hat h, h') - \sqrt{\frac{2(n_S + n_T) \log \nicefrac{1}{\delta}}{n_S n_T}},
    \end{align*}
    then either \cref{strengthened-assumption} holds, or an event has occurred which had probability $\leq \delta$ over the randomness of the samples $\hat\src, \hat\trg$.
\end{corollary}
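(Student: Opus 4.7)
The plan is to obtain \cref{corollary} as the logical contrapositive of the preceding corollary, which the excerpt states but does not prove. First I would establish that preceding corollary by mirroring the proof of \cref{thm:bound} ``from the other side''. Assume \cref{strengthened-assumption} fails, so $\Delta(\hat h, y^*) > \Delta(\hat h, h')$; by a one-line expansion analogous to \cref{lemma:error-expansion} this gives the population-level strict inequality $\epsilon_{\trg}(\hat h) > \epsilon_{\src}(\hat h) + \Delta(\hat h, h')$. I would then reuse the source random variables $r_{\src,i}$ from the proof of \cref{thm:bound} verbatim, but redefine the target-side variables to also pick up the empirical target error:
\begin{align*}
    r_{\trg,i} \;=\; \frac{\mathbf{1}\{\hat h(x_i) \neq h'(x_i)\} - \mathbf{1}\{\hat h(x_i) \neq y_i\}}{n_T}.
\end{align*}

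With this modification the total empirical sum telescopes to $\epsilon_{\hat\src}(\hat h) + \hat\Delta(\hat h, h') - \epsilon_{\hat\trg}(\hat h)$, whose expectation is the corresponding population quantity $\epsilon_{\src}(\hat h) + \Delta(\hat h, h') - \epsilon_{\trg}(\hat h)$, which we just argued is strictly negative under the negation of \cref{strengthened-assumption}. The new $r_{\trg,i}$ take values in $[-\nicefrac{1}{n_T}, \nicefrac{1}{n_T}]$, so applying Hoeffding with total squared range $n_S(\nicefrac{2}{n_S})^2 + n_T(\nicefrac{2}{n_T})^2 = \nicefrac{4}{n_S} + \nicefrac{4}{n_T}$ yields, with probability at least $1-\delta$, a one-sided deviation bound of exactly $\sqrt{2(n_S + n_T)\log(\nicefrac{1}{\delta})/(n_S n_T)}$, matching the concentration term in the statement. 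Combining the strictly negative mean with this one-sided bound gives the preceding corollary's strict empirical inequality.

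For \cref{corollary} itself, the remaining step is pure logic. The preceding corollary says that under the negation of \cref{strengthened-assumption}, the event that the strict empirical inequality \emph{fails} has probability at most $\delta$ over the sample draws of $\hat\src, \hat\trg$. But that failure event is precisely the hypothesis of \cref{corollary} (the $\leq$ direction of the same inequality). Hence whenever the hypothesis is observed, either \cref{strengthened-assumption} is true, or we landed in the $\leq\delta$-probability bad event, which is exactly what \cref{corollary} claims.

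The main obstacle is bookkeeping the concentration constant: the move from the target range $[0, \nicefrac{1}{n_T}]$ in the proof of \cref{thm:bound} to $[-\nicefrac{1}{n_T}, \nicefrac{1}{n_T}]$ here quadruples the target contribution inside Hoeffding's exponent, which is why the prefactor becomes $2(n_S + n_T)$ rather than the asymmetric $n_S + 4n_T$ of \cref{thm:bound}. Once the random variables are set up correctly, the probabilistic content is a single one-sided Hoeffding bound and the contrapositive step requires no new input.
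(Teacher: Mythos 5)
Your proposal is correct and matches the paper's own (largely implicit) argument: the paper likewise obtains \cref{corollary} as the contrapositive of the preceding corollary, which it justifies by "repeating essentially the same proof as \cref{thm:bound} in the other direction." Your explicit redefinition of the target-side random variables, the resulting total squared range $\nicefrac{4}{n_S}+\nicefrac{4}{n_T}$, and the resulting constant $2(n_S+n_T)$ all check out and correctly account for why the concentration term differs from that of \cref{thm:bound}.
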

We evaluate this bound on non-domain-adversarial shifts with $\delta = 10^{-6}$. As some of the BREEDS shifts have as few as 68 test samples, we restrict ourselves to shifts with $n_T \geq 500$ to ignore those where the finite-sample term heavily dominates; this removes a little over 20\% of all shifts. Among the remainder, we find that the bound in \cref{corollary} holds 55.7\% of the time when using full features and 25.7\% of the time when using logits. This means that for these shifts, we can be essentially certain that \cref{strengthened-assumption}---and therefore also \cref{orig-assumption}---is true.

Note that the fact that the bound is \emph{not} violated for a given shift does not at all imply that the assumption is not true. In general, the only rigorous way to prove that \cref{strengthened-assumption} does not hold would be to show that for a fixed $\delta$, the fraction of shifts for which the bound in \cref{thm:bound} does not hold is larger than $\delta$ (in a manner that is statistically significant under the appropriate hypothesis test). Because this never occurs in our experiments, we cannot conclude that the assumption is ever false. At the same time, the fact that the bound \emph{does} hold at least $1-\delta$ of the time does not prove that the assumption is true---it merely suggests that it is reasonable and that the bound should continue to hold in the future. This is why it is important for \cref{strengthened-assumption} to be simple and intuitive, so that we can trust that it will persist and anticipate when it will not.

However, \cref{corollary} allows us to make a substantially stronger statement. In fact, it says that for \emph{any} distribution shift, with enough samples, we can prove a posteriori whether or not \cref{strengthened-assumption} holds, because the gap between these two bounds will shrink with increasing sample size.

\end{document}